\documentclass[11pt]{article}

\usepackage[preprint]{acl}
\usepackage{times}
\usepackage{latexsym}
\usepackage[utf8]{inputenc} % allow utf-8 input
\usepackage[T1]{fontenc}    % use 8-bit T1 fonts
\usepackage{booktabs}       % professional-quality tables
\usepackage{amsfonts}       % blackboard math symbols
\usepackage{nicefrac}       % compact symbols for 1/2, etc.
\usepackage{microtype}      % microtypography
\usepackage{inconsolata}
\usepackage{xcolor}         % colors
\usepackage{enumitem}

\usepackage{graphicx}
\usepackage{subfigure}

\usepackage{amsmath}
\usepackage{amssymb}
\usepackage{mathtools}
\usepackage{amsthm}
\usepackage{algorithm}
\usepackage{algpseudocode}
\usepackage{multirow}
\usepackage{bm}
\usepackage[capitalize,noabbrev]{cleveref}

\theoremstyle{plain}
\newtheorem{theorem}{Theorem}[section]
\newtheorem{proposition}[theorem]{Proposition}
\newtheorem{lemma}[theorem]{Lemma}
\newtheorem{corollary}[theorem]{Corollary}
\theoremstyle{definition}

\theoremstyle{remark}
\newtheorem{remark}[theorem]{Remark}

\usepackage{xspace}

\newcommand{\ours}{SPARE\xspace}

\title{SPARE: Sparsity Allocation via Routing Entropy for Efficient MoEs}
\author{
  \textbf{Yifan Yang \quad Hanling Yi \quad Dongquan Yang \quad Feng Lin} \\
  \textbf{Rong Xiao \quad Aijun Li \quad Xiaotian Yu} \\
  Intellifusion Inc.
}
\hypersetup{
  pdftitle={SPARE: Sparsity Allocation via Routing Entropy for Efficient MoEs},
  pdfauthor={Yifan Yang, Hanling Yi, Dongquan Yang, Feng Lin, Rong Xiao, Aijun Li, Xiaotian Yu}
}

\begin{document}

\maketitle

\begin{abstract}
Mixture-of-Experts (MoE) architectures scale model capacity efficiently, yet their substantial memory footprint poses a major barrier to deployment. Expert pruning reduces this footprint, but most methods focus on which experts to remove and prune every layer equally. We propose \textbf{SPARE} (\underline{\textbf{Sp}}arsity \underline{\textbf{A}}llocation via \underline{\textbf{R}}outing \underline{\textbf{E}}ntropy), a training-free, selector-agnostic method that decides how many experts to remove from each layer using routing entropy, the entropy of tokens' normalized Top-$K$ gate weights on calibration data. When a token's gate weight is spread evenly, removing one active expert discards little of it, so SPARE prunes layers with more diffuse routing more aggressively. A perturbation bound under random pruning formalizes this intuition, and controlled interventions with a fixed selector link concentrated routing to larger output changes. With the same selector and budget, SPARE improves average accuracy over uniform allocation on Mixtral, Qwen3, and DeepSeek models, especially at high pruning ratios. At 50\% expert pruning, it raises the eight-task average of Qwen3-30B-A3B from 57.2\% to 63.8\%, and the math, code, and science average of DeepSeek-R1 from 80.1 to 82.8.
\end{abstract}

\section{Introduction}
\label{intro}
The pursuit of stronger reasoning capabilities has pushed Large Language Models (LLMs) to the trillion-parameter scale, incurring prohibitive computational costs. The Mixture-of-Experts (MoEs) architecture addresses this challenge through \textit{conditional computation}, which activates only a sparse subset of parameters per input token~\citep{shazeer2017outrageously, lepikhin2020gshard, fedus2022switch, dai2024deepseekmoe}. While conditional computation drastically reduces FLOPs, the memory overhead necessitates loading the full set of experts. Furthermore, the requirement to distribute these experts across devices complicates the communication topology, leading to efficiency bottlenecks. This highlights the urgent need for compression techniques capable of shrinking pre-trained MoEs.

To address this, recent research has focused on expert pruning, which removes entire experts to reduce model size. Existing approaches usually estimate an expert-level importance score from activation frequency, routing statistics, or reconstruction error~\citep{dong2025domain,lasby2025reap}.
These scores provide useful criteria for selecting experts within a layer. However, using expert importance directly for cross-layer pruning is less reliable. Scores computed in different layers are not necessarily calibrated, since activation scales, expert functions, and token distributions vary across depth. The calibration domain can further change which experts appear important. As a result, globally ranking experts across all layers may over-prune sensitive layers~\citep{chowdhury2024provably,lasby2025reap}.

This limitation comes from conflating two decisions: \emph{inter-layer allocation}, which determines how many experts to remove from each layer under a global budget, and \emph{intra-layer selection}, which determines which experts to remove within a layer. Existing criteria such as REAP and EASY-EP address the second; SPARE addresses the first and can be paired with any of them. 
Non-uniform allocation has been explored in dense LLMs~\citep{men2025shortgpt,yin2024outlier} and, more recently, in MoE expert pruning~\citep{yang2024moe,li2026cgnnprune}, using signals such as calibration loss or expert-usage frequency. SPARE instead allocates budgets by how gate weight is distributed among each token's active experts, a training-free signal distinct from usage statistics across inputs.

To capture these routing dynamics, we use \emph{routing entropy}: the Shannon entropy of each token's normalized Top-$K$ gate weights, averaged over calibration tokens. As illustrated in Figure~\ref{fig:teaser_entropy_sensitivity}, high-entropy layers spread gate weight broadly among the active experts, so removing one active expert discards only a small share of each token's weight, which is then reassigned to the remaining experts. Low-entropy layers concentrate weight on fewer experts, suggesting stronger input-conditioned specialization and making aggressive pruning riskier. This signal is layer-level rather than expert-level: it assumes neither that particular experts are indispensable nor that the same experts are selected across domains. Although expert identities shift between web text, code, math, and science, layer-wise entropy rankings remain stable across calibration domains, making them a robust basis for allocation.

Leveraging this insight, we introduce \textbf{SPARE}, a training-free sparsity allocation framework
that assigns non-uniform pruning budgets across MoE layers. Our main contributions are:
\begin{itemize}[leftmargin=*, itemsep=1pt, topsep=2pt, parsep=0pt, partopsep=0pt]
    \item \textbf{Entropy-guided, plug-and-play allocation.}
    We formulate MoE expert pruning as an inter-layer budget allocation problem and replace uniform allocation with entropy-aware budgets. SPARE decides how many experts to prune in each layer, while existing criteria such as REAP and EASY-EP decide which ones, without retraining or modifying the router.

    \item \textbf{Theoretical foundation.}
    We derive a perturbation bound that, under uniform random pruning, grows with the routing entropy gap. The analysis provides a layer-level sensitivity signal and an existence argument for deterministic masks.
    
    \item \textbf{Empirical validation.}
    With the same selector and budget, SPARE improves average accuracy over uniform allocation on Mixtral, Qwen3, and DeepSeek models, especially at high pruning ratios. At 50\% expert pruning, it raises the eight-task average of Qwen3-30B-A3B from 57.2\% to 63.8\%, and the math, code, and science average of DeepSeek-R1 from 80.1 to 82.8.
\end{itemize}

\section{Related Work}
\label{sec:related}
\paragraph{Routing Heterogeneity.}
MoEs scale model capacity through conditional computation~\citep{shazeer2017outrageously,fedus2022switch}.
Despite load-balancing objectives, expert behavior can remain non-uniform: \citet{su2025unveiling} identify ``super-experts'' that produce extreme activation outliers, and \citet{lo2024closer} examine relationships between routing weights and expert outputs. These observations motivate studying routing heterogeneity across layers.
\paragraph{Expert Pruning and Compression.}
Expert pruning reduces the memory footprint of pretrained MoEs. Selection criteria use routing statistics~\citep{dong2025domain,koishekenov2023memory}, router-weighted activation saliency~\citep{lasby2025reap}, or reconstruction loss~\citep{lu2024not}; EEP~\citep{liu2024efficient} instead uses evolutionary search.
Shapley-MoE~\citep{huang2025discovering} ranks experts globally across layers via router-guided sampling; we exclude it from comparisons under fixed layer budgets, which would alter its selection rule.
\citet{chowdhury2024provably} provide a pruning guarantee for fine-tuned sparse MoEs based on changes in router distributions. SEER-MoE~\citep{muzio2024seer}, MoE-SVD~\citep{li2025moesvd}, and AdaWhiten~\citep{adawhiten2026} explore further pruning and decomposition strategies, alongside task-specific pruning~\citep{chen2022taskspecific} and benchmarks such as MC-Suite~\citep{jaiswal2025finding}. Expert merging, skipping, dynamic Top-$K$ routing, and intra-expert pruning~\citep{xie2024moe} operate at other granularities; SPARE instead targets budget allocation.

\paragraph{Layer-wise Sparsity Allocation.}
In dense LLMs, ShortGPT~\citep{men2025shortgpt} studies
block removal, whereas OWL~\citep{yin2024outlier} allocates weight sparsity using activation-aware
outlier statistics. SlimGPT~\citep{ling2024slimgpt} also adopts non-uniform pruning schedules.
For MoEs, MoE-I$^2$~\citep{yang2024moe} estimates layer importance from calibration losses after
removing individual experts, assigns non-uniform pruning ratios, and selects experts through
genetic and block-wise search; its full pipeline additionally includes low-rank decomposition and LoRA recovery.
C-GNN-PRUNE~\citep{li2026cgnnprune} computes the entropy of expert-selection frequencies pooled over calibration samples and retains more experts in layers with higher usage entropy. Its expert-selection procedure uses learned graph embeddings. SPARE averages, over tokens, the entropy of each token's normalized Top-$K$ gate weights and prunes more experts in layers with higher gate entropy. Usage entropy describes how broadly expert identities are used across inputs; gate entropy describes how evenly weight is spread among the experts active for a single token. The rules therefore treat high entropy in opposite ways without contradiction: they measure different distributions. SPARE contributes this routing signal and a selector-independent allocation rule.

\begin{figure*}[t]
  \centering
  \includegraphics[width=\textwidth]{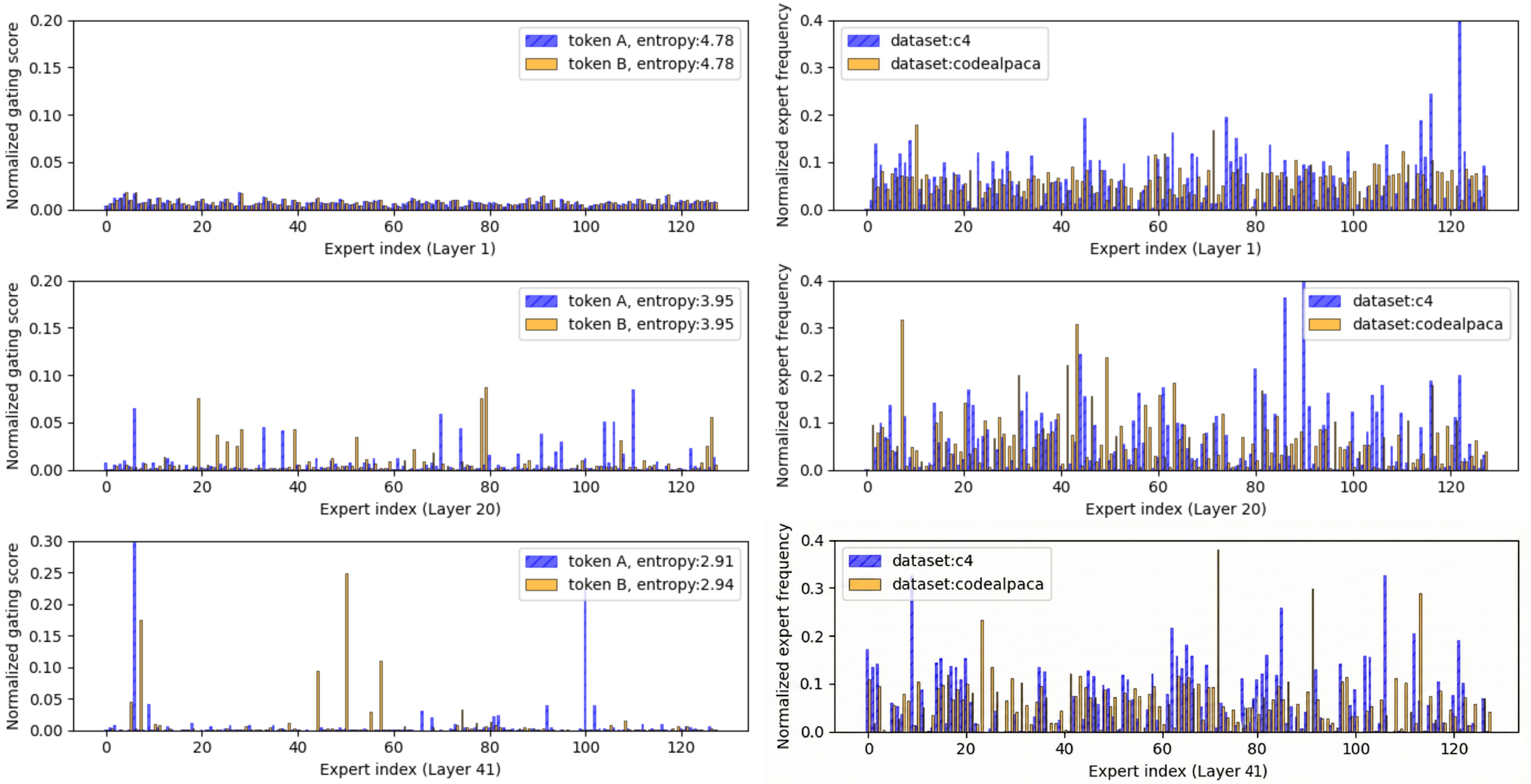}
  \vspace{-2mm}
\caption{
Visualization of routing heterogeneity and domain sensitivity on Qwen3-30B-A3B.
Left: gating distributions for two representative tokens at selected layers.
These single-token examples illustrate diffuse and concentrated routing; quantitative analyses use layer-aggregated statistics. Right: expert selection frequencies on C4 and CodeAlpaca. The selected expert
identities can differ substantially across domains, which makes pruning directly from single-domain
expert frequencies brittle. SPARE instead uses layer-wise routing entropy, an aggregated
concentration statistic, to guide inter-layer sparsity allocation.
}
\label{fig:teaser_entropy_sensitivity}
\vspace{-2mm}
\end{figure*}

\section{Routing Entropy and Heterogeneity of MoE Layers}
\label{sec:prelim}

We analyze layer-wise routing heterogeneity in MoEs and define routing entropy as the signal used
by SPARE for inter-layer budget allocation.

\subsection{Problem Formulation}
\label{sec:prelim:setup}

Consider an MoE layer $\ell$ with $N$ experts. For token $\mathbf{x}$, the router selects the Top-$K$
expert set $\mathcal{T}_\ell(\mathbf{x})$ and computes the normalized gating distribution
\begin{equation}
    g_{\ell,j}(\mathbf{x}) =
    \frac{\exp(z_{\ell,j}(\mathbf{x}))}{\sum_{t\in\mathcal{T}_\ell(\mathbf{x})}\exp(z_{\ell,t}(\mathbf{x}))}
    \mathbf{1}\{j\in\mathcal{T}_\ell(\mathbf{x})\}.
    \label{eq:topk-gate}
\end{equation}
Let $\mathbf{m}=[m_0,\ldots,m_{L-1}]^\top$ denote the layer-wise pruning profile, where $m_\ell$
is the number of experts removed from layer $\ell$. Given a global target $M_{\mathrm{target}}$,
SPARE solves the inter-layer allocation problem
\begin{equation}
    \sum_{\ell=0}^{L-1} m_\ell=M_{\mathrm{target}}, \qquad 0\le m_\ell\le N-K,
    \label{eq:budget_constraint}
\end{equation}
while leaving the intra-layer choice of removed experts to an expert-ranking rule such as
EASY-EP~\citep{dong2025domain} or REAP~\citep{lasby2025reap}.
\subsection{Routing Entropy Gap}
\label{sec:prelim:specialization}

For each token, we define the Top-$K$ routing entropy as
\begin{equation}
    H_\ell(\mathbf{x})
    =
    -\sum_{j\in\mathcal{T}_\ell(\mathbf{x})}
    g_{\ell,j}(\mathbf{x})\log g_{\ell,j}(\mathbf{x}).
    \label{eq:token_entropy}
\end{equation}
The layer-average entropy and routing entropy gap are
\begin{equation}
    \bar{H}_\ell
    =
    \mathbb{E}_{\mathbf{x}\sim\mathcal{D}}[H_\ell(\mathbf{x})],
    \qquad
    S_\ell
    =
    \log K-\bar{H}_\ell .
    \label{eq:specialization_def}
\end{equation}
Equivalently,
\begin{equation}
    S_\ell
    =
    \mathbb{E}_{\mathbf{x}\sim\mathcal{D}}
    \left[
    D_{\mathrm{KL}}
    \left(
    \mathbf{g}_\ell(\mathbf{x})
    \middle\|
    u_{\mathcal{T}_\ell(\mathbf{x})}
    \right)
    \right],
    \label{eq:specialization_kl}
\end{equation}
where $u_{\mathcal{T}_\ell(\mathbf{x})}$ is the uniform distribution over the active experts. A larger
$S_\ell$ means sharper routing concentration and motivates a more conservative pruning budget.
\subsection{Layer-wise Routing Heterogeneity}
\label{sec:motivation}

Figure~\ref{fig:teaser_entropy_sensitivity} shows that gating concentration varies across layers (left), while selected expert identities vary across domains (right). SPARE therefore allocates budgets with the layer-level statistic $\bar{H}_\ell$ rather than domain-specific expert frequencies. Routing entropy indicates how much gate weight a token loses when an active expert is removed, but not whether the remaining experts can compensate, so we test pruning tolerance directly.

We divide the 48 layers of Qwen3-30B-A3B into entropy terciles and use REAP to prune 50\% of experts within one tercile at a time, removing about 16.7\% of all routed experts. From an unpruned eight-task average of 72.1\%, pruning the high-, medium-, and low-entropy terciles yields 68.2\%, 67.2\%, and 66.4\%, respectively, supporting entropy as a pruning-tolerance signal in this setting (see also Sections~\ref{sec:exp:risk} and~\ref{sec:calibration}).

\section{Theoretical Analysis: Routing-Entropy Perturbation Bounds}
\label{sec:theory}
We analyze how routing entropy can guide the allocation of a fixed expert-pruning budget across layers.
Under the stated routing and bounded-output assumptions, we first bound the output perturbation for any fixed pruning mask using the routing mass assigned to removed experts.
For uniform random pruning, this bound takes an explicit form involving the routing entropy gap.
We then give a sufficient condition under which a deterministic expert selector inherits the same entropy-dependent bound, with an additive term when the condition holds approximately.
Finally, we describe how local pruning perturbations propagate through the network.

\subsection{Single-Layer Pruning Perturbation}
Consider a layer $\ell$ with output $y_\ell(\mathbf{x}) = \sum_j g_{\ell,j}(\mathbf{x}) f_{\ell,j}(\mathbf{x})$. After removing $P\subseteq\{1,\ldots,N\}$, $|P|=m\le N-K$, we recompute Top-$K$ among remaining experts with unchanged logits and fixed tie-breaking, then apply softmax on the new support to obtain $g_\ell^{(P)}$. Write $y^{(P)}_\ell(\mathbf{x})=\sum_j g^{(P)}_{\ell,j}(\mathbf{x})f_{\ell,j}(\mathbf{x})$ and $\delta_\ell(\mathbf{x},P)=y^{(P)}_\ell(\mathbf{x})-y_\ell(\mathbf{x})$.

The following lemma quantifies the routing change.

\begin{lemma}[Exact $\ell_1$ Identity under Top-$K$ Rerouting]
\label{lemma:exact_identity}
Let $\mathcal{E}_\ell(\mathbf{x}, P) \triangleq \sum_{j \in \mathcal{T}_\ell(\mathbf{x}) \cap P} g_{\ell,j}(\mathbf{x})$ be the probability mass assigned to pruned experts. The $\ell_1$ distance between the original and pruned gating distributions satisfies:
\begin{equation}
    \big\|g_\ell^{(P)}(\mathbf{x}) - g_\ell(\mathbf{x})\big\|_1 \;=\; 2\,\mathcal{E}_\ell(\mathbf{x}, P).
\end{equation}
\end{lemma}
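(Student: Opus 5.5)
The plan is to describe the rerouted support explicitly and then split the $\ell_1$ sum over three disjoint index sets, using one sign argument. Fix $\mathbf{x}$ and $\ell$, and drop them from the notation. Write $\mathcal{T}$ for the original Top-$K$ set. Let $S=\mathcal{T}\setminus P$ be the surviving active experts and $Q=\mathcal{T}\cap P$ the pruned active ones, with $r=|Q|$, so $\mathcal{E}=\sum_{j\in Q}g_j$. If $r=0$, then $\mathcal{T}$ is still the Top-$K$ set among the remaining experts (same logits, same tie-breaking), so $g^{(P)}=g$ and both sides vanish. Assume $r\ge 1$ from now on.

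\textbf{Step 1 (structure of the new support).} I will show that the new Top-$K$ set is $\mathcal{T}'=S\cup R$, where $R$ consists of the $r$ highest-ranked experts outside $\mathcal{T}\cup P$. Every expert in $S$ outranks every expert outside $\mathcal{T}$ under the fixed logit order with fixed tie-breaking, so all of $S$ stays in the new Top-$K$. The remaining $r$ slots are filled from $\{1,\ldots,N\}\setminus(\mathcal{T}\cup P)$. This set has at least $N-K-(m-r)\ge r$ elements because $m\le N-K$, so $R$ is well defined.

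\textbf{Step 2 (sign of the change on $S$).} This is the only nontrivial point. Let $Z_A=\sum_{t\in A}e^{z_t}$. Each expert in $R$ ranks below every expert in $\mathcal{T}$, hence below every expert in $Q$. Pairing the elements of $R$ and $Q$ arbitrarily gives $Z_R\le Z_Q$. Therefore, for $j\in S$,
\begin{equation*}
g^{(P)}_j=\frac{e^{z_j}}{Z_S+Z_R}\;\ge\;\frac{e^{z_j}}{Z_S+Z_Q}=g_j .
\end{equation*}
So the absolute values on $S$ can be removed with a known sign.

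\textbf{Step 3 (assemble).} The supports of $g$ and $g^{(P)}$ lie in $S\cup Q\cup R$, and these three sets are disjoint. On $Q$ only $g$ is nonzero, on $R$ only $g^{(P)}$ is nonzero, and on $S$ Step 2 fixes the sign. This gives
\begin{equation*}
\|g^{(P)}-g\|_1=\sum_{j\in S}\big(g^{(P)}_j-g_j\big)+\sum_{j\in R}g^{(P)}_j+\sum_{j\in Q}g_j .
\end{equation*}
Both distributions sum to one, so $\sum_{S}g^{(P)}+\sum_{R}g^{(P)}=1$ and $\sum_S g=1-\mathcal{E}$. The first two terms therefore combine to $1-(1-\mathcal{E})=\mathcal{E}$, and the last term is $\mathcal{E}$ by definition. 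The total is $2\mathcal{E}$.

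The main obstacle is Step 2, the monotonicity $Z_R\le Z_Q$. Without it the absolute values on $S$ would not collapse, and one would only get the inequality $\le 2\mathcal{E}$ from the triangle inequality rather than an identity. This step is also where unchanged logits and fixed tie-breaking are genuinely needed: they guarantee that the survivors of $\mathcal{T}$ outrank every replacement expert.
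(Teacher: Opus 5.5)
Your proof is correct and follows essentially the same route as the paper. It uses the same three-way split into surviving active experts, pruned active experts, and replacement experts, the same pairing argument $Z_R\le Z_Q$ that makes every survivor's weight scale up by a common factor $\ge 1$, and the same mass-conservation step that shows the gain off $Q$ equals $\mathcal{E}$. One small correction to your closing remark: without the sign argument, the triangle inequality gives $\|g^{(P)}-g\|_1\ge 2\mathcal{E}$ (not $\le 2\mathcal{E}$), so Step 2 is what turns this lower bound into an identity.
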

\textit{Proof.} The new softmax normalizer cannot increase, so surviving original Top-$K$ weights cannot decrease. Unpruned experts gain exactly the removed mass $\mathcal{E}_\ell$, yielding an $\ell_1$ change of $2\mathcal{E}_\ell$ (Appendix~\ref{appendix:single_layer}).

% Using this identity, we connect the expected perturbation error to the layer's specialization. This theorem serves as the theoretical bridge between our metric $S_\ell$ and the layer sensitivity.

% \begin{theorem}[Expected Perturbation and Specialization]
% \label{thm:single_layer}
% Assume bounded expert outputs $\|f_{\ell,j}(\mathbf{x})\|_1 \le B$. Under uniform random pruning of $m$ experts, the expected squared output perturbation is bounded by:
% \begin{equation}
% \label{eq:single_layer_bound}
%     \mathbb{E}_{\mathbf{x}, P}\big[\|\delta_\ell(\mathbf{x}, P)\|_1^2\big] 
%     \;\le\; \underbrace{C(N, m, B)}_{\text{Scaling Factor}} \cdot \left( \frac{1}{K} + 2 \cdot S_\ell \right),
% \end{equation}
% where $S_\ell = \log K - \bar{H}_\ell$ is the routing entropy gap.
% \end{theorem}
% \textit{Proof Sketch.} We bound $\|\delta_\ell\|_1$ using Lemma~\ref{lemma:exact_identity} to show that error scales with the second moment of the gating weights $\sum g_j^2$. We then invoke the relationship between the collision entropy (related to $\sum g^2$) and the Shannon entropy $\bar{H}_\ell$, demonstrating that higher deviation from uniformity (larger $S_\ell$) strictly increases the expected error bound (Full proof shown in Appendix~\ref{appendix:single_layer}).
We first separate the mask-wise perturbation bound from the distribution used to select pruning masks.

\begin{proposition}[Perturbation bound for arbitrary selectors]
\label{prop:general_selector}
Fix $0\le m\le N-K$ and let $\mathcal{P}_{\ell,m}$ contain all size-$m$ pruning masks. Under the rerouting rule in Appendix~\ref{app:pruning_renorm} and bounded expert outputs $\|f_{\ell,j}(x)\|_1\le B_\ell$, every admissible $P$ and $x$ satisfy
\begin{equation}
\|\delta_\ell(x,P)\|_1^2
\le 4B_\ell^2\mathcal{E}_\ell(x,P)^2.
\label{eq:maskwise_perturbation}
\end{equation}
Condition on fixed calibration data. For any token-independent distribution $q_\ell$ on $\mathcal{P}_{\ell,m}$, let $\mathcal{M}_\ell(q_\ell)=\mathbb{E}_{x,P\sim q_\ell}[\mathcal{E}_\ell(x,P)^2]$. Then
\begin{equation}
\mathbb{E}_{x,P\sim q_\ell}[\|\delta_\ell(x,P)\|_1^2]
\le 4B_\ell^2\mathcal{M}_\ell(q_\ell).
\label{eq:general_selector_bound}
\end{equation}
A deterministic selector has $q_\ell(P)=\mathbf{1}\{P=P_{\mathcal A}\}$.
\end{proposition}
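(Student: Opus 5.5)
The plan is to prove the mask-wise inequality \eqref{eq:maskwise_perturbation} pointwise from Lemma~\ref{lemma:exact_identity}, and then average it.

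\textbf{Step 1 (pointwise bound).} Fix $x$ and an admissible $P$. Write the perturbation as a gate-weighted combination of expert outputs,
\[
\delta_\ell(x,P)=\sum_j \bigl(g^{(P)}_{\ell,j}(x)-g_{\ell,j}(x)\bigr) f_{\ell,j}(x).
\]
Only finitely many indices have a nonzero coefficient: the original Top-$K$ set together with the newly promoted experts. Applying the triangle inequality for $\|\cdot\|_1$ and the bound $\|f_{\ell,j}(x)\|_1\le B_\ell$ gives
\[
\|\delta_\ell(x,P)\|_1\le B_\ell\,\bigl\|g^{(P)}_\ell(x)-g_\ell(x)\bigr\|_1 .
\]
By Lemma~\ref{lemma:exact_identity}, the right-hand side equals $2B_\ell\,\mathcal{E}_\ell(x,P)$. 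Squaring both nonnegative sides yields \eqref{eq:maskwise_perturbation}.

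The only point needing care is that $g^{(P)}_\ell$ is exactly the rerouted distribution described in Appendix~\ref{app:pruning_renorm}. That is, Top-$K$ is recomputed on the surviving experts with unchanged logits, and the softmax is renormalized on the new support. These are precisely the hypotheses of Lemma~\ref{lemma:exact_identity}, so the identity applies for every admissible $P$ with $|P|=m\le N-K$. The constraint $m\le N-K$ guarantees that at least $K$ experts survive, so the new support is well defined.

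\textbf{Step 2 (averaging over masks).} Condition on the fixed calibration data, so that the expectation over $x$ is a finite average. Let $q_\ell$ be any token-independent distribution on the finite set $\mathcal{P}_{\ell,m}$, so that $x$ and $P$ are independent. Since \eqref{eq:maskwise_perturbation} holds for every pair $(x,P)$, taking expectations under the product measure and using monotonicity and linearity of expectation gives
\[
\mathbb{E}_{x,P\sim q_\ell}\bigl[\|\delta_\ell\|_1^2\bigr]\le 4B_\ell^2\,\mathbb{E}_{x,P\sim q_\ell}\bigl[\mathcal{E}_\ell(x,P)^2\bigr]=4B_\ell^2\mathcal{M}_\ell(q_\ell),
\]
which is \eqref{eq:general_selector_bound}. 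All quantities are bounded, since $\mathcal{E}_\ell\in[0,1]$, so there are no integrability issues.

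\textbf{Step 3 (deterministic selectors).} For a deterministic selector, take the Dirac distribution $q_\ell(P)=\mathbf{1}\{P=P_{\mathcal A}\}$. It is token-independent because $P_{\mathcal A}$ is computed once from the calibration data. The bound then reduces to $\mathbb{E}_x[\|\delta_\ell(x,P_{\mathcal A})\|_1^2]\le 4B_\ell^2\,\mathbb{E}_x[\mathcal{E}_\ell(x,P_{\mathcal A})^2]$.

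\textbf{Main obstacle.} The argument is short, and the main obstacle is conceptual rather than computational. One must justify that the token-independence of $q_\ell$ is what permits the expectation to factor cleanly. It also matters that the pointwise bound needs no independence at all, because it holds for every mask. So the substantive content rests entirely on Lemma~\ref{lemma:exact_identity}, in particular on the rerouted weights being a valid distribution so that the $\ell_1$ change equals exactly $2\mathcal{E}_\ell$.
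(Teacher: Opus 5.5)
Your proposal is correct and follows the paper's proof essentially line for line: the triangle inequality with $\|f_{\ell,j}(x)\|_1\le B_\ell$ and the exact identity $\|g_\ell^{(P)}(x)-g_\ell(x)\|_1=2\mathcal{E}_\ell(x,P)$ give the pointwise bound, which is squared and then averaged, with the deterministic selector treated as a point mass. One small correction to your closing remark: the averaged inequality does not need independence between $x$ and $P$, since a pointwise bound integrates under any joint law; token-independence of $q_\ell$ only makes $\mathcal{M}_\ell(q_\ell)$ a well-defined function of $q_\ell$ and is needed for later steps such as the uniform-pruning moment formula.
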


Uniform random pruning gives a closed-form special case of Proposition~\ref{prop:general_selector}.

\begin{theorem}[Expected perturbation under random pruning]
\label{thm:single_layer_entropy}
Under Proposition~\ref{prop:general_selector}'s assumptions, let $P\sim\mathrm{Unif}(\mathcal{P}_{\ell,m})$ independently of $x$. Then
\begin{equation}
\begin{aligned}
\mathbb{E}_{x,P}\!\left[\|\delta_\ell(x,P)\|_1^2\right]
&\le 4B_\ell^2 \Bigl(
c_0(m,N) \\
&\quad + c_1(m,N)
\left[\frac{1}{K}+2S_\ell\right]
\Bigr),
\end{aligned}
\label{eq:entropy_bound_exact}
\end{equation}
where
\[
\begin{aligned}
c_0(m,N)&=\frac{m(m-1)}{N(N-1)},\\
c_1(m,N)&=\frac{m(N-m)}{N(N-1)}.
\end{aligned}
\]
and $S_\ell=\log K-\bar{H}_\ell$ is the routing entropy gap.
\end{theorem}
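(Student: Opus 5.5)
We start from Proposition~\ref{prop:general_selector} with $q_\ell=\mathrm{Unif}(\mathcal{P}_{\ell,m})$. This reduces the claim to bounding
\[
\mathcal{M}_\ell=\mathbb{E}_{x}\,\mathbb{E}_{P}\bigl[\mathcal{E}_\ell(x,P)^2\bigr].
\]
Because $P$ is independent of $x$, we fix a token $x$ and compute the inner expectation exactly. Abbreviate $g_j=g_{\ell,j}(x)$ for $j\in\mathcal{T}_\ell(x)$. Then
\[
\mathcal{E}_\ell(x,P)=\sum_{j\in\mathcal{T}}g_j\mathbf{1}\{j\in P\},
\]
so
\[
\mathcal{E}_\ell^2=\sum_j g_j^2\mathbf{1}\{j\in P\}+\sum_{i\ne j}g_ig_j\mathbf{1}\{i,j\in P\}.
\]
Under a uniform size-$m$ subset, $\Pr[j\in P]=m/N$ and $\Pr[i,j\in P]=m(m-1)/(N(N-1))$ for $i\neq j$. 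Using $\sum_{i\ne j}g_ig_j=1-\sum_j g_j^2$ (since $\sum_j g_j=1$), we get
\[
\mathbb{E}_P[\mathcal{E}_\ell^2]=c_0+\Bigl(\frac{m}{N}-c_0\Bigr)\sum_j g_j^2 .
\]
A direct check gives $\frac{m}{N}-\frac{m(m-1)}{N(N-1)}=\frac{m(N-m)}{N(N-1)}=c_1$. This yields the exact identity
\[
\mathbb{E}_P[\mathcal{E}_\ell(x,P)^2]=c_0(m,N)+c_1(m,N)\,\|g_\ell(x)\|_2^2 .
\]

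The remaining step is to bound the collision probability $\|g\|_2^2$ by an affine function of the KL gap $D(x)=\log K-H_\ell(x)$. The target is
\[
\|g\|_2^2\le \frac1K+2D(x).
\]
Write $\|g\|_2^2-\frac1K=\|g-u\|_2^2$, where $u$ is uniform on $\mathcal{T}_\ell(x)$. Then bound $\|g-u\|_2^2\le\|g-u\|_1^2$, which holds because $\|v\|_2\le\|v\|_1$. Pinsker's inequality gives $\|g-u\|_1^2\le 2D_{\mathrm{KL}}(g\|u)$. By \eqref{eq:specialization_kl}, $D_{\mathrm{KL}}(g\|u)=\log K-H_\ell(x)$, which gives the target inequality. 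Taking $\mathbb{E}_x$ and using linearity, $\mathbb{E}_x[D(x)]=S_\ell$, so
\[
\mathcal{M}_\ell\le c_0+c_1\bigl(\tfrac1K+2S_\ell\bigr).
\]
Multiplying by $4B_\ell^2$ from \eqref{eq:general_selector_bound} gives \eqref{eq:entropy_bound_exact}.

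The only step requiring care is this collision-versus-Shannon comparison. The factor $2$ must come out exactly, and the bound must hold pointwise so that it survives averaging. The Pinsker route works pointwise and gives precisely the constant $2$. Two further points need checking. First, $c_1\ge0$ holds since $m\le N-K<N$, so the inequality direction is preserved when we substitute the bound on $\|g\|_2^2$. Second, the rerouting in Proposition~\ref{prop:general_selector} is admissible for every mask of size $m\le N-K$, so the uniform distribution is supported on valid masks. A tighter alternative would use Rényi-2 versus Shannon entropy, $\log\|g\|_2^2\ge -H$, but that only gives a lower bound on $\|g\|_2^2$, so the Pinsker route is the one to use.
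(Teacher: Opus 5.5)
Your proposal is correct and follows the paper's proof essentially step for step. You start from the mask-wise bound $\|\delta_\ell\|_1\le 2B_\ell\mathcal{E}_\ell$ (squared and averaged), then compute $\mathbb{E}_P[\mathcal{E}_\ell(x,P)^2]=c_0+c_1\|g_\ell(x)\|_2^2$ exactly via inclusion indicators, as in Lemma~\ref{lemma:3}. Finally you bound the collision mass by $1/K+2(\log K-H_\ell(x))$ through $\|g\|_2^2=1/K+\|g-u\|_2^2$, $\|\cdot\|_2\le\|\cdot\|_1$, and Pinsker, as in Lemma~\ref{lemma:4}, before averaging over $x$.
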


The proof combines the uniform-subset second moment with Pinsker's inequality; see Appendix~\ref{appendix:single_layer}.

\begin{corollary}[Entropy bound for deterministic selectors]
\label{cor:selector_transfer}
Under the same assumptions, write $\Psi_\ell(m)$ for the bracketed term in Eq.~\eqref{eq:entropy_bound_exact} and let $U_{\ell,m}=\mathrm{Unif}(\mathcal{P}_{\ell,m})$. If a deterministic selector returns $P_{\mathcal A}$ satisfying
\begin{equation}
\begin{aligned}
\mathbb{E}_x[\mathcal{E}_\ell(x,P_{\mathcal A})^2]
\le{}&\mathcal{M}_\ell(U_{\ell,m})\\
&+\varepsilon_{\mathcal A,\ell},
\end{aligned}
\label{eq:selector_dominance}
\end{equation}
for some $\varepsilon_{\mathcal A,\ell}\ge0$. Then
\begin{equation}
\begin{aligned}
\mathbb{E}_x[\|\delta_\ell(x,P_{\mathcal A})\|_1^2]
\le{}&4B_\ell^2\bigl(\Psi_\ell(m)\\
&+\varepsilon_{\mathcal A,\ell}\bigr).
\end{aligned}
\label{eq:deterministic_selector_bound}
\end{equation}
If this second moment is no greater than the uniform random pruning baseline, $\varepsilon_{\mathcal A,\ell}=0$ suffices.
\end{corollary}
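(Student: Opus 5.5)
The corollary follows by chaining Proposition~\ref{prop:general_selector} with the intermediate moment bound inside the proof of Theorem~\ref{thm:single_layer_entropy}. I would carry it out in three steps.

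\textbf{Step 1: reduce to the second moment.} Apply Proposition~\ref{prop:general_selector} to the point-mass distribution $q_\ell(P)=\mathbf{1}\{P=P_{\mathcal A}\}$. This is token-independent, since $P_{\mathcal A}$ is computed once from the fixed calibration data and then held fixed. Eq.~\eqref{eq:general_selector_bound} then gives
\[
\mathbb{E}_x[\|\delta_\ell(x,P_{\mathcal A})\|_1^2]\le 4B_\ell^2\,\mathbb{E}_x[\mathcal{E}_\ell(x,P_{\mathcal A})^2].
\]
Equivalently, one can average the mask-wise bound Eq.~\eqref{eq:maskwise_perturbation} over $x$.

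\textbf{Step 2: apply the dominance hypothesis.} By Eq.~\eqref{eq:selector_dominance}, the right-hand side of Step 1 is at most
\[
4B_\ell^2\bigl(\mathcal{M}_\ell(U_{\ell,m})+\varepsilon_{\mathcal A,\ell}\bigr).
\]

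\textbf{Step 3: bound the uniform baseline by $\Psi_\ell(m)$.} It remains to show $\mathcal{M}_\ell(U_{\ell,m})\le\Psi_\ell(m)$, where $\Psi_\ell(m)=c_0(m,N)+c_1(m,N)[1/K+2S_\ell]$. This is the intermediate bound in the proof of Theorem~\ref{thm:single_layer_entropy}, before the factor $4B_\ell^2$ is attached. I would re-derive it in two parts.

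First, fix $x$ and write $\mathcal{E}_\ell(x,P)=\sum_{j\in\mathcal{T}_\ell(x)} g_{\ell,j}\mathbf{1}\{j\in P\}$. For a uniform size-$m$ subset, $\Pr[j\in P]=m/N$ and $\Pr[i,j\in P]=m(m-1)/(N(N-1))$ for $i\ne j$. Expanding the square and using $\sum_j g_{\ell,j}=1$ gives
\[
\mathbb{E}_P[\mathcal{E}^2]=c_0+c_1\sum_j g_{\ell,j}^2,
\]
because $m/N-c_0=c_1$.

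Second, bound $\sum_j g_{\ell,j}^2$ by entropy. Write $\sum_j g_j^2=1/K+\|g-u\|_2^2$. Then use $\|g-u\|_2^2\le\|g-u\|_1^2\le 2D_{\mathrm{KL}}(g\|u)$, where the last step is Pinsker's inequality. Taking $\mathbb{E}_x$ and using Eq.~\eqref{eq:specialization_kl} gives $\mathbb{E}_x\sum_j g_{\ell,j}^2\le 1/K+2S_\ell$. Since $c_1\ge0$, this yields $\mathcal{M}_\ell(U_{\ell,m})\le\Psi_\ell(m)$, and combining with Steps 1--2 proves Eq.~\eqref{eq:deterministic_selector_bound}.

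The final sentence of the corollary is the special case $\mathbb{E}_x[\mathcal{E}_\ell(x,P_{\mathcal A})^2]\le\mathcal{M}_\ell(U_{\ell,m})$, where Eq.~\eqref{eq:selector_dominance} holds with $\varepsilon_{\mathcal A,\ell}=0$.

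The only real obstacle is Step 3, specifically keeping the constants consistent with Theorem~\ref{thm:single_layer_entropy}'s bracketed term. The points to check are the factor $2$ from Pinsker in natural-log units, and the identity $\mathbb{E}_P[\mathcal{E}^2]=c_0+c_1\sum_j g_{\ell,j}^2$, which relies on each token's gate being normalized over exactly $K$ active experts. The remaining steps are direct substitution.
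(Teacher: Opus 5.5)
Your proof is correct and follows the paper's argument: the mask-wise bound of Proposition~\ref{prop:general_selector} averaged over $x$, then the dominance hypothesis, then $\mathcal{M}_\ell(U_{\ell,m})=c_0+c_1\mathbb{E}_x[Q_\ell(x)]\le\Psi_\ell(m)$ via the uniform-subset moment computation (Lemma~\ref{lemma:3}) and the Pinsker-based collision bound (Lemma~\ref{lemma:4}). One small correction to your closing remark: the moment identity uses only the normalization $\sum_j g_{\ell,j}(x)=1$, and the support size $K$ enters only through the $1/K$ term of the entropy--collision bound.
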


Proofs of Proposition~\ref{prop:general_selector} and Corollary~\ref{cor:selector_transfer} are in Appendix~\ref{app:general_selector_proofs}. The condition in Eq.~\eqref{eq:selector_dominance} is not established for REAP or EASY-EP.

\subsection{End-to-End Propagation as a Weighting Factor}

Consider an $L$-layer residual network with original states $h_\ell$, pruned states $\hat h_\ell$, and identical initial states. Assume deterministic Lipschitz constants $\lambda_\ell$ hold uniformly over admissible masks on a common domain containing both trajectories. Unrolling the error recursion gives:

\begin{proposition}[Propagation decomposition]
\label{prop:propagation}
For any fixed sequence of pruning masks $\{P_\ell\}_{\ell=0}^{L-1}$,
\begin{equation}
\begin{aligned}
\|\hat{h}_L-h_L\|_1
&\le \sum_{\ell=0}^{L-1}
\Gamma_\ell
\|\delta_\ell(h_\ell,P_\ell)\|_1,\\
\Gamma_\ell
&=\alpha\prod_{k=\ell+1}^{L-1}(1+\alpha\lambda_k).
\end{aligned}
\label{eq:propagation_decomposition}
\end{equation}
\end{proposition}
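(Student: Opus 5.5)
The plan is to write down the residual recursion explicitly, split the one-step error into a propagated part and a fresh local part, and then unroll by induction. The paper leaves the architecture implicit; I will take it to be $h_{\ell+1}=h_\ell+\alpha\,F_\ell(h_\ell)$ for the original network and $\hat h_{\ell+1}=\hat h_\ell+\alpha\,F_\ell^{(P_\ell)}(\hat h_\ell)$ for the pruned one. Here $F_\ell$ is the MoE block output $y_\ell$ (possibly composed with attention and normalisation), $F_\ell^{(P_\ell)}$ is its rerouted counterpart, and $\alpha$ is the residual scaling. The Lipschitz assumption is read as $\|F_k^{(P)}(u)-F_k^{(P)}(v)\|_1\le\lambda_k\|u-v\|_1$, uniformly over admissible masks $P$ and over $u,v$ in the common domain containing both trajectories.

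Let $e_\ell=\|\hat h_\ell-h_\ell\|_1$, so $e_0=0$. Subtracting the two recursions and adding and subtracting $\alpha F_\ell^{(P_\ell)}(h_\ell)$ gives
\[
\hat h_{\ell+1}-h_{\ell+1}=(\hat h_\ell-h_\ell)+\alpha\bigl(F_\ell^{(P_\ell)}(\hat h_\ell)-F_\ell^{(P_\ell)}(h_\ell)\bigr)+\alpha\bigl(F_\ell^{(P_\ell)}(h_\ell)-F_\ell(h_\ell)\bigr).
\]
The last bracket is exactly $\delta_\ell(h_\ell,P_\ell)$, the local perturbation evaluated on the \emph{original} trajectory, which is why the statement uses $h_\ell$ as the argument. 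The triangle inequality and the uniform Lipschitz bound then give
\[
e_{\ell+1}\le(1+\alpha\lambda_\ell)\,e_\ell+\alpha\,\|\delta_\ell(h_\ell,P_\ell)\|_1 .
\]

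Next I unroll this linear recursion by a discrete Gr\"onwall induction. The claim to prove for each $t$ is
\[
e_t\le\sum_{\ell<t}\alpha\prod_{k=\ell+1}^{t-1}(1+\alpha\lambda_k)\,\|\delta_\ell\|_1 .
\]
The base case $t=0$ holds because the sum is empty and $e_0=0$. For the inductive step, multiplying the sum by $(1+\alpha\lambda_t)$ extends each product up to index $t$, and the new term $\alpha\|\delta_t\|_1$ carries an empty product equal to $1$. Setting $t=L$ gives exactly $\Gamma_\ell$.

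The only delicate point is the Lipschitz step. It needs $\hat h_\ell$ and $h_\ell$ to lie in the common domain, and it needs the pruned map $F_\ell^{(P_\ell)}$, not merely the original $F_\ell$, to be Lipschitz. Top-$K$ rerouting is discontinuous at ties, so this cannot be derived and must be taken as an assumption. That is why the statement requires $\lambda_\ell$ to hold uniformly over admissible masks; I will invoke it directly rather than try to prove it. Everything else is a routine triangle inequality and induction, and the result holds for any fixed mask sequence, with no expectation needed.
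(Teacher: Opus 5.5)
Your proposal is correct and matches the paper's proof (its one-step error lemma followed by the discrete Gr\"onwall unrolling). Both add and subtract the pruned map at the original state $h_\ell$, use the mask-uniform Lipschitz assumption on that pruned map to get $e_{\ell+1}\le(1+\alpha\lambda_\ell)e_\ell+\alpha\|\delta_\ell(h_\ell,P_\ell)\|_1$, and then unroll from $e_0=0$. Your explicit induction hypothesis just makes the paper's ``iterate the substitution'' step precise.
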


The proof follows by discrete Gronwall unrolling; see Appendix~\ref{appendix:e2e_bound}. Taking expectations and applying Proposition~\ref{prop:general_selector} yields
\begin{equation}
\mathbb{E}\|\hat h_L-h_L\|_1
\le 2\sum_{\ell=0}^{L-1}
\Gamma_\ell B_\ell\sqrt{\mathcal{M}_\ell(q_\ell)}.
\label{eq:e2e_general_selector}
\end{equation}
Here $\mathcal{M}_\ell$ uses the original layer-input distribution, with masks independent of the evaluation input given fixed calibration data. Its bound is $\Psi_\ell(m_\ell)$ for uniform random pruning and $\Psi_\ell(m_\ell)+\varepsilon_{\mathcal A,\ell}$ under Eq.~\eqref{eq:selector_dominance}. The decomposition motivates a propagation regularizer, without identifying an exact estimator of $\Gamma_\ell$.

\subsection{Implications for Pruning Strategy}

For fixed $(m,N,K,B_\ell)$ and selector residual, the bound increases with $S_\ell=\log K-\bar H_\ell$. SPARE does not minimize the summed bound; it uses $S_\ell$ as a layer-ranking signal and ignores the layer-dependent scale $B_\ell$. The allocation score in Section~\ref{sec:method} is therefore a heuristic.

Since exact Lipschitz products are intractable for large MoEs, SPARE uses a cosine-based proxy as an empirical propagation regularizer; entropy remains the primary allocation signal.

\section{Entropy-Aware Sparsity Allocation}
\label{sec:method}

SPARE converts layer-wise routing entropy into an integer pruning profile under the global budget.
It uses entropy as the primary signal and an optional propagation-aware term as a lightweight
regularizer.

\subsection{Entropy-Based Allocation Score}
\label{sec:method:redundancy}

Since $S_\ell=\log K-\bar{H}_\ell$, smaller entropy gap is equivalent to larger routing entropy.
We therefore define the normalized allocation score
\begin{equation}
    \mathcal{R}_\ell
    =
    \frac{\bar{H}_\ell-\min_k\bar{H}_k}
    {\max_k\bar{H}_k-\min_k\bar{H}_k}.
    \label{eq:redundancy_score}
\end{equation}
Layers with larger $\mathcal{R}_\ell$ are assigned larger pruning budgets. A contrast coefficient
$p$ controls the sharpness of allocation:
\begin{equation}
    \mathcal{W}_{\mathrm{base}}^{(\ell)}
    =
    (\mathcal{R}_\ell)^p .
    \label{eq:base_weight}
\end{equation}

\subsection{Propagation-Aware Regularization}
\label{sec:method:stability}

Equation~\eqref{eq:propagation_decomposition} motivates accounting for downstream propagation,
but exact Lipschitz products are intractable for large MoEs. We use cosine similarity only as a
data-dependent proxy. Let $c_k$ be the average cosine similarity between the input and output
activations of layer $k$. We define
\begin{equation}
    D_\ell
    =
    \sum_{k=\ell+1}^{L-1}(1-c_k),
    \qquad
    \phi_{\mathrm{stab}}^{(\ell)}
    =
    \exp(-\beta D_\ell).
    \label{eq:stability_weight}
\end{equation}
This term mildly reduces the pruning weight of layers whose perturbations pass through more
transformative downstream blocks. When $\beta=0$, SPARE reduces to entropy-only allocation.

\subsection{Discrete Budget Projection}
\label{sec:method:optimization}

The final continuous allocation weight is
\begin{equation}
    \mathcal{W}_\ell
    =
    \mathcal{W}_{\mathrm{base}}^{(\ell)}
    \phi_{\mathrm{stab}}^{(\ell)} .
    \label{eq:final_weight}
\end{equation}
We convert $\{\mathcal{W}_\ell\}$ into an integer pruning profile by solving for a global scaling
factor $\lambda^\ast$:
\begin{equation}
    \sum_{\ell=0}^{L-1}
    \mathcal{P}_{\mathbb{Z}}
    \left(
    \lambda^\ast \mathcal{W}_\ell
    \right)
    =
    M_{\mathrm{target}},
    \label{eq:integer_projection}
\end{equation}
where $\mathcal{P}_{\mathbb{Z}}(\cdot)$ rounds to the nearest feasible integer under
$0\le m_\ell\le N-K$. Since the left-hand side is monotone in $\lambda^\ast$, we solve it by
bisection and obtain the final profile $\mathbf{m}$.

\section{Experiments}
\label{sec:experiments}

We ask three questions: \textbf{(RQ1)} Does entropy-guided allocation improve benchmark performance over uniform allocation at the same pruning budget? \textbf{(RQ2)} Does the gain persist across the evaluated expert selectors, and does entropy correlate with pruning risk when the selector is fixed? \textbf{(RQ3)} How stable are entropy profiles across calibration domains, and how does non-uniform allocation affect throughput in the tested implementation?
\subsection{Experimental Setup}

\paragraph{Models.}
We evaluate \texttt{Mixtral-8x7B-\allowbreak Instruct-v0.1}, \texttt{Qwen3-30B-A3B},
\texttt{Qwen3-235B-A22B}, and \texttt{DeepSeek-R1}. 

\paragraph{Benchmarks.} We assess capabilities across four distinct domains: 
\textbf{(i) Multiple-choice tasks:} MMLU~\citep{hendrycks2020measuring}, ARC-Challenge (ARC-c) / ARC-Easy (ARC-e)~\citep{clark2018think}, BoolQ~\citep{clark2019boolq}, HellaSwag (Hella)~\citep{zellers2019hellaswag}, OpenBookQA (OBQA)~\citep{mihaylov2018can}, Recognizing Textual Entailment Challenge (RTE)~\citep{bentivogli2009fifth}, and WinoGrande (WinG)~\citep{sakaguchi2021winogrande}; 
\textbf{(ii) Reasoning:} MATH-500~\citep{lightman2024lets}; 
\textbf{(iii) Science question answering:} GPQA-Diamond~\citep{rein2023gpqa}; and 
\textbf{(iv) Code:} LiveCodeBench~\citep{jain2025livecodebench}.

\paragraph{Calibration.}
For Mixtral and Qwen3-30B-A3B, we use a calibration set of 1024 randomly sampled sequences from C4 to estimate layer-average routing entropy ($\bar{H}_\ell$).
For large-scale reasoning models such as DeepSeek-R1, we sample 25 instances each from AIME (math), LiveCodeBench (code), and GPQA (science).

\paragraph{Baselines and Protocols.}
To disentangle the benefits of \emph{sparsity allocation} from \emph{expert selection}, we structure our comparison along two orthogonal dimensions.
\textbf{1) Intra-layer Expert Ranking:} Determines expert priority within a layer. We employ: (i) \textit{Random}; (ii) \textit{EASY-EP}~\citep{dong2025domain}; and (iii) \textit{REAP}~\citep{lasby2025reap}.
\textbf{2) Inter-layer Sparsity Allocation:} Determines the pruning budget $m_\ell$ per layer. We compare: (i) \textbf{Uniform}; (ii) \textbf{OWL}~\citep{yin2024outlier}; and (iii) \textbf{SPARE}.

% \subsection{Experimental Setup}

% \paragraph{Models.}
% To ensure architectural diversity and scalability, we select representative models from three state-of-the-art MoE families: \textbf{(1) Mixtral:} \texttt{Mixtral-8x7B-Instructv0.1}; \textbf{(2) Qwen:} \texttt{Qwen3-30B-A3B} and \texttt{Qwen3-235B-A22B}; \textbf{(3) DeepSeek:} \texttt{DeepSeek-R1 (671B)}.

\begin{table*}[t]
    \centering
    \caption{Benchmark accuracy (\%) and unweighted eight-task averages under expert pruning. \textbf{PR} denotes the global pruning ratio. \textbf{Inter.}: uniform (Uni.) or \ours budget allocation. \textbf{Intra.}: Random, REAP, or EASY-EP expert selection. \textbf{Bold} indicates the best score within each model/PR setting; gains in \textbf{Avg.} compare \ours with uniform allocation using the same selector.}
    \label{tab:full_results}

    \begingroup
    \scriptsize
    \setlength{\tabcolsep}{2.3pt}
    \renewcommand{\arraystretch}{0.90}
    \resizebox{\textwidth}{!}{%
    \begin{tabular}{p{2.10cm}|c|cc|ccccccccc}
        \toprule
        \textbf{Model} & \textbf{PR} & \textbf{Inter.} & \textbf{Intra.}
        & \textbf{ARC-c} & \textbf{ARC-e} & \textbf{BoolQ} & \textbf{Hella} & \textbf{MMLU} & \textbf{OBQA} & \textbf{RTE} & \textbf{WinG} & \textbf{Avg.$\uparrow$} \\
        \midrule

        \multirow{9}{*}{\shortstack[l]{\textbf{Mixtral-8x7B}\\\textbf{-Instruct-v0.1}}}
        & 0\%  & \multicolumn{2}{c|}{Baseline}
              & 65.0 & 84.2 & 88.7 & 86.1 & 69.1 & 49.6 & 72.2 & 74.0 & 73.6 \\
        \cmidrule{2-13}

        & \multirow{4}{*}{25\%}
          & \multirow{2}{*}{Uni.} & Random
              & 59.6 & 81.4 & 86.9 & 82.9 & 62.7 & 46.8 & 71.1 & 75.6 & 70.9 \\
        &   &                  & REAP
              & 60.6 & 80.7 & 87.5 & 83.5 & 63.3 & 48.6 & \textbf{71.9} & 70.9 & 70.9 \\
        \cmidrule(lr){3-13}
        &   & \multirow{2}{*}{\ours} & Random
              & 60.1 & \textbf{81.9} & 87.1 & 83.2 & 63.1 & 47.5 & \textbf{71.9} & \textbf{75.8} & 71.3~($\uparrow$0.4) \\
        &   &                      & REAP
              & \textbf{61.8} & 81.5 & \textbf{87.9} & \textbf{84.0} & \textbf{64.5} & \textbf{49.2} & \textbf{71.9} & 72.1 & \textbf{71.6}~($\uparrow$\textbf{0.7}) \\
        \cmidrule{2-13}

        & \multirow{4}{*}{50\%}
          & \multirow{2}{*}{Uni.} & Random
              & 51.4 & 73.3 & 81.5 & 75.7 & 52.1 & 41.0 & 70.2 & 69.5 & 64.3 \\
        &   &                  & REAP
              & 50.9 & 76.2 & 85.3 & 78.1 & 51.9 & \textbf{44.6} & \textbf{71.6} & 72.5 & 66.4 \\
        \cmidrule(lr){3-13}
        &   & \multirow{2}{*}{\ours} & Random
              & 50.9 & 75.5 & 81.7 & 77.2 & \textbf{54.4} & 42.0 & 70.8 & 70.2 & 65.3~($\uparrow$1.0) \\
        &   &                      & REAP
              & \textbf{53.6} & \textbf{78.4} & \textbf{86.9} & \textbf{78.2} & 53.1 & 44.4 & 70.4 & \textbf{74.1} & \textbf{67.4}~($\uparrow$\textbf{1.0}) \\
        \midrule

        \multirow{13}{*}{\shortstack[l]{\textbf{Qwen3-30B}\\\textbf{-A3B}}}
        & 0\%  & \multicolumn{2}{c|}{Baseline}
              & 56.2 & 79.1 & 88.3 & 77.6 & 77.9 & 44.6 & 81.6 & 71.1 & 72.1 \\
        \cmidrule{2-13}

        & \multirow{6}{*}{25\%}
          & \multirow{3}{*}{Uni.} & Random
              & 50.8 & 75.6 & 84.8 & 70.7 & 69.7 & 40.2 & 80.5 & 68.8 & 67.6 \\
        &   &                  & REAP
              & 55.8 & 78.3 & 87.1 & 75.8 & 71.9 & 44.6 & 79.4 & 70.5 & 70.4 \\
        &   &                  & Easy-EP
              & 53.2 & 78.0 & 88.8 & 76.6 & 74.2 & 44.0 & \textbf{81.9} & \textbf{70.7} & 70.9 \\
        \cmidrule(lr){3-13}
        &   & \multirow{3}{*}{\ours} & Random
              & 50.6 & 76.8 & 85.5 & 70.8 & 68.8 & 41.2 & 79.0 & 69.0 & 67.7~($\uparrow$0.1) \\
        &   &                      & REAP
              & \textbf{56.0} & \textbf{78.4} & 87.5 & 76.0 & 72.1 & \textbf{44.8} & 80.1 & 70.6 & 70.7~($\uparrow$0.3) \\
        &   &                      & Easy-EP
              & 53.8 & \textbf{78.4} & \textbf{89.0} & \textbf{76.9} & \textbf{74.4} & 44.6 & 81.5 & 69.4 & \textbf{71.0}~($\uparrow$\textbf{0.1}) \\
        \cmidrule{2-13}

        & \multirow{6}{*}{50\%}
          & \multirow{3}{*}{Uni.} & Random
              & 40.0 & 62.8 & 77.9 & 55.5 & 49.6 & 25.4 & 74.1 & 61.1 & 55.8 \\
        &   &                  & REAP
              & 40.5 & 61.2 & 82.9 & 64.4 & 51.1 & 36.2 & 61.0 & 64.5 & 57.7 \\
        &   &                  & Easy-EP
              & 37.2 & 58.1 & 81.0 & 66.7 & 49.0 & 37.6 & 63.2 & 64.7 & 57.2 \\
        \cmidrule(lr){3-13}
        &   & \multirow{3}{*}{\ours} & Random
              & 39.8 & 63.1 & 77.1 & 55.9 & 50.6 & 31.8 & 74.0 & 61.2 & 56.7~($\uparrow$0.9) \\
        &   &                      & REAP
              & \textbf{43.1} & 64.6 & 83.3 & 64.5 & 48.9 & 37.0 & 74.7 & 63.6 & 60.0~($\uparrow$2.3) \\
        &   &                      & Easy-EP
              & 42.5 & \textbf{65.2} & \textbf{84.9} & \textbf{73.2} & \textbf{59.1} & \textbf{41.2} & \textbf{75.8} & \textbf{68.4} & \textbf{63.8}~($\uparrow$\textbf{6.6}) \\
        \bottomrule
    \end{tabular}}
    \endgroup
\end{table*}

\subsection{Main Results}
\label{sec:exp:main_small}
Table~\ref{tab:full_results} shows improvements in macro-average accuracy in all ten evaluated combinations of model, pruning ratio, and intra-layer selector. The largest gain occurs on Qwen3-30B-A3B at 50\% pruning: with EASY-EP fixed, \ours raises mean accuracy from 57.2\% to 63.8\%, a gain of 6.6 percentage points. At the same pruning ratio, gains with Random and REAP are 0.9 and 2.3 points, respectively, showing that the benefit depends on the selector. Improvements at 25\% pruning are smaller, ranging from 0.1 to 0.7 points.

% \begin{table}[!htbp]
% \centering
% \begingroup
% \scriptsize
% \setlength{\tabcolsep}{2.3pt}
% \renewcommand{\arraystretch}{0.90}
% \resizebox{\columnwidth}{!}{%
% \begin{tabular}{p{1.15cm}|c|c|ccc|c}
% \toprule
% \textbf{Model} & \textbf{PR} & \textbf{Inter.} & \textbf{LCB} & \textbf{Math} & \textbf{GPQA} & \textbf{Avg.$\uparrow$} \\
% \midrule

% \multirow{3}{*}{DS-R1}
%   & --  & Baseline & 84.5 & 96.0 & 69.7 & 83.4 \\
% \cmidrule(lr){2-7}
%   & \multirow{2}{*}{50\%} & Uni.  & 79.0 & 94.7 & 66.7 & 80.1 \\
%   &                       & \ours & 82.0 & 94.3 & 72.2 & 82.8~($\uparrow$2.7) \\
% \midrule

% \multirow{3}{*}{Qwen3}
%   & --  & Baseline & 92.5 & 96.0 & 77.5 & 88.7 \\
% \cmidrule(lr){2-7}
%   & \multirow{2}{*}{50\%} & Uni.  & 89.0 & 95.7 & 74.4 & 86.4 \\
%   &                       & \ours & 93.0 & 95.7 & 75.1 & 87.9~($\uparrow$1.5) \\
% \bottomrule
% \end{tabular}}
% \caption{\textbf{Scalability to massive reasoning models} under global 50\% expert pruning. \textbf{PR} denotes the global pruning ratio. \textbf{Inter.} denotes the inter-layer sparsity allocation strategy (Uniform vs. \ours). \textbf{Avg.} is computed over the three reported benchmarks (LCB, Math, GPQA), and the gain of \ours over Uniform is shown in parentheses. Abbreviations: DS-R1 = DeepSeek-R1 (671B); Qwen3 = Qwen3-235B-A22B; Uni. = Uniform.}
% \label{tab:large_scale_layerwise_compact}
% \endgroup
% \end{table}

\begin{table}[t]
\centering
\caption{Large-scale MoEs at 50\% global expert pruning (PR), using EASY-EP for both allocation rules. DS-R1: DeepSeek-R1; Qwen3: Qwen3-235B-A22B. LCB/Math/GPQA: LiveCodeBench/MATH-500/GPQA-Diamond. Avg. is the unweighted three-score mean; gains compare with uniform allocation.}
\label{tab:large_scale_layerwise_compact}
\begingroup
\scriptsize
\setlength{\tabcolsep}{3pt}
\renewcommand{\arraystretch}{0.88}
\resizebox{\columnwidth}{!}{%
\begin{tabular}{p{1.15cm}|c|c|ccc|c}
\toprule
\textbf{Model} & \textbf{PR} & \textbf{Inter.} & \textbf{LCB} & \textbf{Math} & \textbf{GPQA} & \textbf{Avg.$\uparrow$} \\
\midrule
\multirow{3}{*}{DS-R1}
  & --  & Baseline & 84.5 & 96.0 & 69.7 & 83.4 \\
\cmidrule(lr){2-7}
  & \multirow{2}{*}{50\%} & Uni.  & 79.0 & 94.7 & 66.7 & 80.1 \\
  &                       & \ours & 82.0 & 94.3 & 72.2 & 82.8~($\uparrow$2.7) \\
\midrule
\multirow{3}{*}{Qwen3}
  & --  & Baseline & 92.5 & 96.0 & 77.5 & 88.7 \\
\cmidrule(lr){2-7}
  & \multirow{2}{*}{50\%} & Uni.  & 89.0 & 95.7 & 74.4 & 86.4 \\
  &                       & \ours & 93.0 & 95.7 & 75.1 & 87.9~($\uparrow$1.5) \\
\bottomrule
\end{tabular}}
\endgroup
\vspace{-2mm}
\end{table}

\subsection{Controlled Analysis of Pruning Risk}
\label{sec:exp:risk}
\paragraph{Entropy groups with a fixed selector.}
On Mixtral-8$\times$7B, we estimate routing entropy using 128 C4 samples and divide layers into high-, middle-, and low-entropy groups. Within each group, we use the same selector, REAP, to prune four of the eight experts per layer while retaining the expert with the highest REAP score. Table~\ref{tab:mixtral_controlled_entropy_risk} reports normalized output perturbation on held-out samples. The low-entropy group has higher perturbation than the high-entropy group in all three domains, although the middle group does not follow a uniformly monotonic ordering. The point estimates indicate an association under this selection procedure.

\begin{table}[t]
\centering
\caption{Normalized output perturbation on Mixtral-8$\times$7B under the same REAP selection procedure. Columns denote layer entropy groups.}
\label{tab:mixtral_controlled_entropy_risk}
\begingroup
\small
\setlength{\tabcolsep}{4pt}
\renewcommand{\arraystretch}{1.05}
\begin{tabular}{lccc}
\toprule
Domain & High & Middle & Low \\
\midrule
WikiText2 & 0.721 & 0.641 & 0.726 \\
MATH-500 & 0.673 & 0.736 & 0.728 \\
HumanEval & 0.726 & 0.725 & 0.763 \\
\bottomrule
\end{tabular}
\endgroup
\end{table}

\paragraph{Single-layer pruning and output-distribution changes.}
We next test whether this association holds when each layer is pruned in
isolation. For each MoE layer, we remove its lowest-saliency experts under
REAP, with saliency estimated on C4, while keeping all other layers intact;
the pruning ratios in Table~\ref{tab:single_layer} apply only to the
intervened layer. We measure the resulting output-distribution change on C4,
Math, Code, and MMLU with KL divergence truncated to the unpruned model's
top-64 tokens, and summarize each layer by its mean over the four domains and
by its worst (maximum) domain. We define routing concentration as
$C_\ell = 1 - \bar{H}_\ell / \log K = S_\ell / \log K$, so a larger $C_\ell$
corresponds to lower routing entropy. In both models, at both pruning ratios,
and under both aggregations, the correlation between $C_\ell$ and
pruning-induced KL is positive, ranging from 0.26 to 0.45. These correlations
are moderate: routing concentration accounts for only part of the
layer-to-layer variation in pruning risk. We therefore use it to rank layers
for budget allocation, not to predict the perturbation of an individual layer.

\begin{table}[t]
\centering
\caption{Pearson  correlation between routing concentration
$C_\ell = 1 - \bar{H}_\ell/\log K$ and pruning-induced KL under single-layer
REAP pruning, computed over $n = 26$ (DeepSeek-V2-Lite) and $n = 48$
(Qwen3-30B-A3B) MoE layers. PR is the pruning ratio of the intervened layer;
KL is truncated to the teacher's top-64 tokens.}
\label{tab:single_layer}
\begingroup
\scriptsize
\setlength{\tabcolsep}{3.5pt}
\renewcommand{\arraystretch}{1.05}
\begin{tabular}{lccc}
\toprule
Model & PR & \shortstack{Mean-domain\\KL} & \shortstack{Worst-domain\\KL} \\
\midrule
DeepSeek-V2-Lite & 25\% & +0.435 & +0.450 \\
DeepSeek-V2-Lite & 50\% & +0.340 & +0.272 \\
Qwen3-30B-A3B & 25\% & +0.288 & +0.257 \\
Qwen3-30B-A3B & 50\% & +0.314 & +0.329 \\
\bottomrule
\end{tabular}
\endgroup
\end{table}

\begin{figure*}[t]
    \centering
    \includegraphics[width=.48\textwidth]{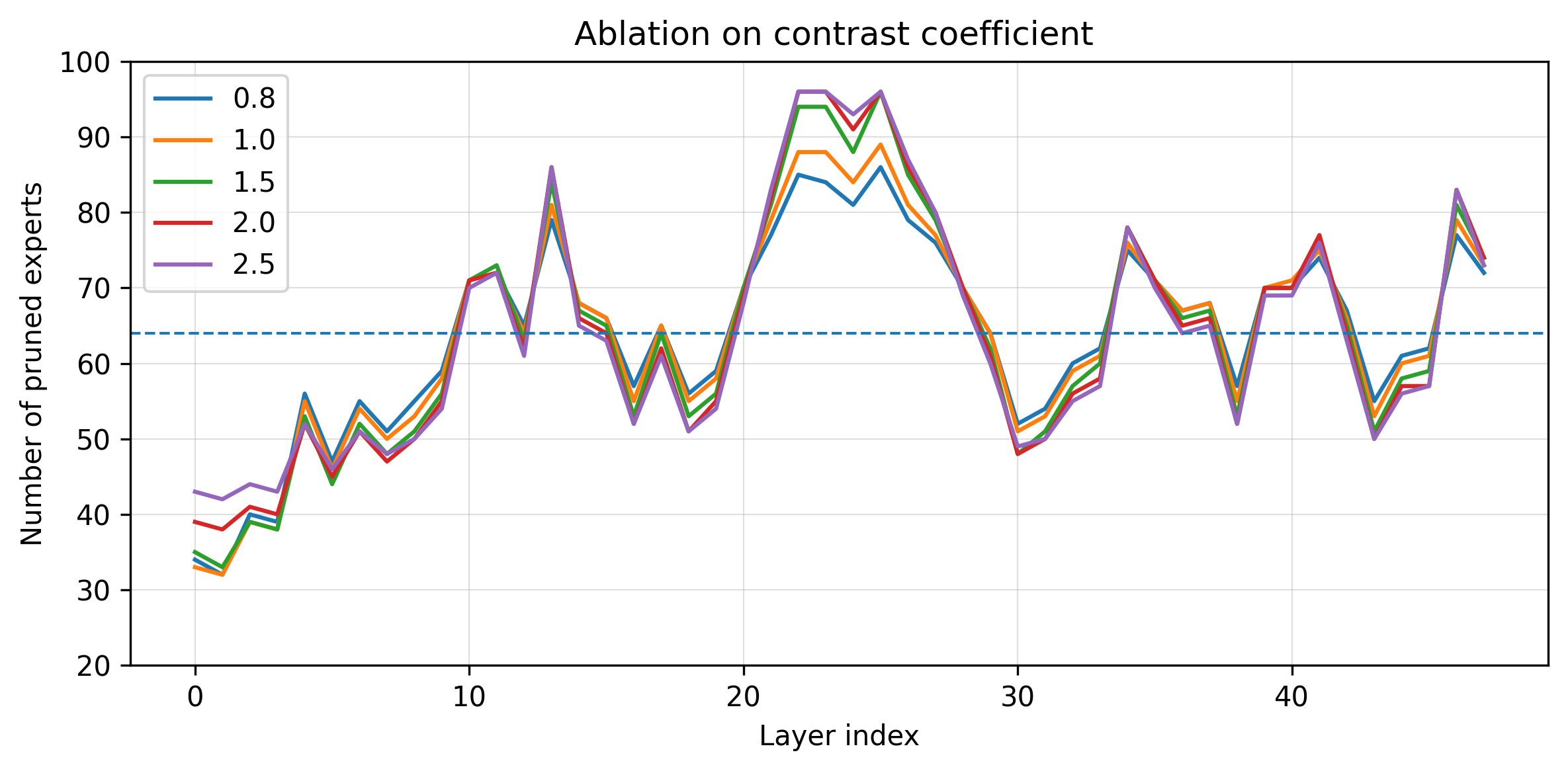}\hfill
    \includegraphics[width=.48\textwidth]{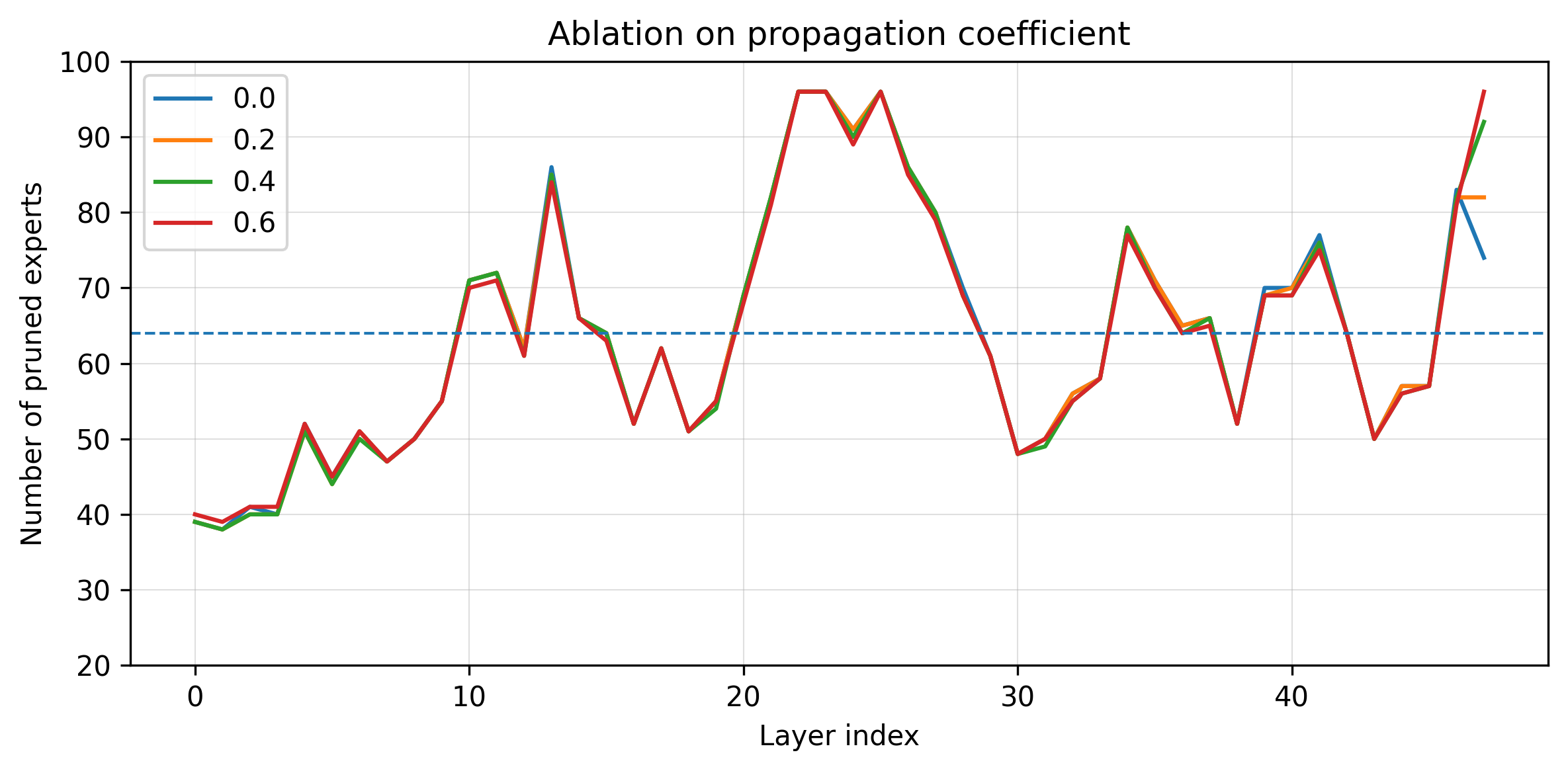}
    \caption{Layer-wise allocation profiles on Qwen3-30B-A3B with EASY-EP at 50\% global expert pruning. Left: varying $p$ with $\beta=0$ changes the contrast between more- and less-pruned layers. Right: varying $\beta\in[0,0.6]$ with $p=2.0$ produces similar profiles.}
    \label{fig:ablation_profiles}
\end{figure*}

\subsection{Scalability to Large-scale Reasoning MoEs}
\label{sec:exp:large_scale}

At 50\% global expert pruning, we compare \ours and uniform allocation on \textbf{Qwen3-235B-A22B} and \textbf{DeepSeek-R1}, using EASY-EP for both (Table~\ref{tab:large_scale_layerwise_compact}). On both models, \ours narrows the gap to the unpruned model in the unweighted three-score mean. It raises this mean from 80.1 to 82.8 on DeepSeek-R1, compared with 83.4 before pruning, and from 86.4 to 87.9 on Qwen3-235B-A22B. On DeepSeek-R1, GPQA-Diamond improves by 5.5 points and LiveCodeBench by 3.0 points over uniform allocation, while MATH-500 declines by 0.4 points. Deployment efficiency is reported in Appendix~\ref{app:impl}.
% \subsection{Scalability to Large-scale Reasoning MoEs}
% \label{sec:exp:large_scale}

% We evaluate scalability on \textbf{Qwen3-235B} and \textbf{DeepSeek-R1} under a strict 50\% pruning regime (Table~\ref{tab:large_scale_layerwise_compact}). While uniform allocation degrades DeepSeek-R1's average reasoning score by 3.3\%, SPARE achieves near-lossless compression ($-0.6\%$ gap), substantially outperforming the uniform baseline by \textbf{+5.5\%} on GPQA. 
% In addition to these algorithmic improvements, the 50\% reduction in the number of experts results nearly 2$\times$ reduction in the model memory footprint. This compression enables the deployments from multi-node configurations (e.g., 2 nodes) to a single A100 node, thereby eliminating the high-latency inter-node all-to-all communication overhead inherent to Expert Parallelism (EP). Detailed communication cost analysis is shown in Appendix~\ref{sec:appendix_ep_comm_savings}.

\subsection{Comparison with Layer-wise Allocation Baselines}
\label{sec:exp:layerwise_comp}
We compare \ours with OWL~\citep{yin2024outlier} adapted from weight-sparsity allocation to expert pruning, holding EASY-EP fixed for both methods. On Qwen3-30B-A3B, average accuracy is comparable at 12.5\% pruning and higher with \ours at 25\%, 37.5\%, and 50\% (Appendix~\ref{app:owl_pareto}). Direct comparisons with other MoE-specific allocation pipelines remain outside the evaluation.

\subsection{Calibration Robustness}
\label{sec:calibration}
We compare DeepSeek-R1 layer-average entropy across four calibration domains: C4 (web), MATH-500 (math), LiveCodeBench (code), and GPQA-Diamond (science). Table~\ref{tab:calibration} reports correlations relative to C4. Spearman correlations range from 0.958 to 0.974, indicating similar layer rankings across these domains. This analysis measures entropy-profile stability; it does not establish identical integer budgets or unchanged downstream performance under every calibration choice.

\begin{table}[t]
\centering
\small
\setlength{\tabcolsep}{3.5pt}
\caption{Cross-domain correlations of layer-average routing entropy on DeepSeek-R1, using C4 as the reference calibration domain.}
\label{tab:calibration}
\resizebox{\columnwidth}{!}{%
\begin{tabular}{llcc}
\toprule
Model & Calibration analysis & Pearson $r$ & Spearman $\rho$  \\
\midrule
DeepSeek-R1 & C4 vs. MATH-500 & 0.988 & 0.962  \\
DeepSeek-R1 & C4 vs. LiveCodeBench & 0.991 & 0.974  \\
DeepSeek-R1 & C4 vs. GPQA-Diamond & 0.979 & 0.958  \\

\bottomrule
\end{tabular}
}
\end{table}

\subsection{Ablation on Hyper-parameters}
\label{sec:ablation}
We run two separate sweeps on Qwen3-30B-A3B with EASY-EP at 50\% global expert pruning (Table~\ref{tab:ablate_avg_two_rows}). The $p$ sweep fixes $\beta=0$: mean accuracy peaks at $p=2.0$ (63.2\%) and falls to 58.5\% at $p=2.5$, showing that stronger contrast need not help. The $\beta$ sweep fixes $p=2.0$: scores range from 63.1\% to 63.8\%, peaking at $\beta=0.6$. These are separate sweeps, not a joint parameter grid.

The allocation profiles in Figure~\ref{fig:ablation_profiles} show that larger $p$ increases the contrast between more- and less-pruned layers, while the profiles for $\beta\in[0,0.6]$ remain close. We selected $p=2.0$ and $\beta=0.6$ on the reported Qwen3-30B-A3B evaluation benchmarks and fixed them for all other models.

\begin{table}[t]
\centering
\caption{\textbf{Hyper-parameter ablation.} Qwen3-30B-A3B with EASY-EP at 50\% global pruning. Unweighted eight-task mean accuracy (\%); the $p$ sweep fixes $\beta=0$, and the $\beta$ sweep fixes $p=2.0$. Bold marks the best score within each sweep.}
\label{tab:ablate_avg_two_rows}
\scriptsize
\setlength{\tabcolsep}{4pt}
\renewcommand{\arraystretch}{0.95}
\resizebox{\columnwidth}{!}{%
\begin{tabular}{c|ccccc|c|cccc}
\toprule
 & \multicolumn{5}{c|}{$p$} &  & \multicolumn{4}{c}{$\beta$ } \\
\cmidrule(lr){2-6}\cmidrule(lr){8-11}
\textbf{Param} & 0.8 & 1.0 & 1.5 & 2.0 & 2.5 & \textbf{Param} & 0.0 & 0.2 & 0.4 & 0.6 \\
\midrule
\textbf{Avg.$\uparrow$} 
& 61.9 & 61.5 & 62.8 & \textbf{63.2} & 58.5
& \textbf{Avg.$\uparrow$}
& 63.2 & 63.1 & 63.1 & \textbf{63.8} \\
\bottomrule
\end{tabular}}
\end{table}

% \subsection{Inference Efficiency}
% \label{sec:exp:efficiency}

% \textcolor{red}{
% We next examine whether the structural reduction brought by \ours translates into practical
% inference benefits. Since expert pruning physically removes routed experts, it directly reduces
% the MoE parameter footprint and can lower the required expert-parallel deployment scale.
% Table~\ref{tab:efficiency} summarizes the deployment cost under 50\% expert pruning.
% Compared with the unpruned model, \ours reduces the number of routed experts by half,
% leading to an approximately 2$\times$ reduction in routed-expert memory.
% For large-scale MoEs such as DeepSeek-R1, this compression can reduce the deployment
% from multi-node expert-parallel serving to a single-node configuration, thereby avoiding
% inter-node all-to-all communication. Importantly, these efficiency gains are achieved while
% preserving near-lossless reasoning performance, as shown in Table~\ref{tab:large_scale_layerwise_compact}.
% }

\section{Conclusion and Future Work}
\label{sec:conclusion}
We introduced \ours, a training-free method that uses token-level routing entropy to allocate expert-pruning budgets across MoE language-model layers and can be paired with existing intra-layer selectors. A perturbation analysis and controlled interventions with a fixed selector associate concentrated routing with greater pruning risk. Under the same expert selector and global budget, \ours improves average performance in the evaluated settings, with larger gains at high pruning ratios: at 50\% expert pruning, it raises the Qwen3-30B-A3B eight-task average from 57.2\% to 63.8\%, and the DeepSeek-R1 three-benchmark average from 80.1 to 82.8, close to the unpruned 83.4. Future work includes studying calibration shift and combining entropy-guided allocation with quantization and expert merging.

\clearpage
\section*{Limitations}
SPARE yields its largest gains at aggressive pruning ratios; at moderate ratios such as 25\%, its improvements over uniform allocation are small. We study SPARE only for expert pruning; combining it with quantization or expert merging is left for future work.

\bibliography{references}

%%%%%%%%%%%%%%%%%%%%%%%%%%%%%%%%%%%%%%%%%%%%%%%%%%%%%%%%%%%%

\newpage
\appendix

% \section{Technical appendices and supplementary material}
% Technical appendices with additional results, figures, graphs, and proofs may be submitted with the paper submission before the full submission deadline (see above). You can upload a ZIP file for videos or code, but do not upload a separate PDF file for the appendix. There is no page limit for the technical appendices. 

% Note: Think of the appendix as ``optional reading'' for reviewers. The paper must be able to stand alone without the appendix; for example, adding critical experiments that support the main claims to an appendix is inappropriate. 

\section{Additional results}
\label{app:impl}

% \subsection{Additional Experiments on GLM-5}
% \label{app:glm5}

To further validate the applicability of \ours to fine-grained MoE architectures, we extend our evaluation to the recent GLM-5. 
Our main experiments already cover representative fine-grained MoEs, including Qwen3-30B-A3B, Qwen3-235B-A22B, and DeepSeek-R1. 
As summarized in Table~\ref{tab:app_moe_arch_configs}, these models differ in the number of routed experts, the use of shared experts, and the number of MoE layers. 
GLM-5 follows a structure similar to DeepSeek-R1, with 256 routed experts, top-8 routing, and one shared expert, but increases the number of layers from 61 to 78.

\begin{table}[ht]
\centering
\caption{\textbf{Architectural configurations of fine-grained MoEs.}
We compare the routed expert count, routing top-$K$, shared expert design, and layer count across the evaluated MoE models.}
\label{tab:app_moe_arch_configs}
\begingroup
\scriptsize
\setlength{\tabcolsep}{4pt}
\renewcommand{\arraystretch}{0.9}
\resizebox{0.78\columnwidth}{!}{%
\begin{tabular}{l|c|c|c|c}
\toprule
\textbf{Model} & \textbf{Routed Experts} & \textbf{Top-$K$} & \textbf{Shared} & \textbf{Layers} \\
\midrule
DeepSeek-R1 (671B) & 256 & 8 & Yes (1) & 61 \\
Qwen3-30B-A3B & 128 & 8 & No & 48 \\
Qwen3-235B-A22B & 128 & 8 & No & 94 \\
GLM-5 & 256 & 8 & Yes (1) & 78 \\
\bottomrule
\end{tabular}}
\endgroup
\end{table}

As in Section~\ref{sec:exp:large_scale}, both \ours and uniform allocation use EASY-EP at 50\% global expert pruning.
On GLM-5, \ours improves the average accuracy from 87.5\% to 89.8\%, yielding a \textbf{+2.3} point gain over uniform allocation. 
This additional result extends the observed average improvement to another fine-grained MoE architecture.

\paragraph{Deployment efficiency.} 
At the same global pruning budget, SPARE and uniform allocation remove the same total number of experts, giving equal routed-expert parameter storage when expert sizes and precision match. We use modified fused MoE Triton kernels to support variable per-layer expert counts. Table~\ref{tab:efficienty} reports throughput on a DeepSeek-style architecture at 50\% pruning. Relative differences from uniform allocation range from a 1.16\% decrease to a 0.56\% increase across the tested batch sizes. Throughput is similar in these measured configurations.

\begin{table}[t]
\centering
\caption{\textbf{Throughput under different batch sizes.}}
\label{tab:efficienty}
\begingroup
\tiny
\setlength{\tabcolsep}{3pt}
\renewcommand{\arraystretch}{0.85}
\resizebox{\columnwidth}{!}{%
\begin{tabular}{l|cccc}
\toprule
\textbf{Batch size} & \textbf{1} & \textbf{16} & \textbf{32} & \textbf{64} \\
\midrule
Uniform & 8.61 & 172.32 & 181.54 & 162.31 \\
SPARE   & 8.51 & 172.57 & 182.56 & 162.24 \\
\bottomrule
\end{tabular}}
\endgroup
\end{table}

Additionally, non-uniform allocation opens future optimization possibilities: layers retaining more experts can use padded expert placement to co-locate the most frequently activated experts, potentially reducing HBM transfer beyond what uniform pruning achieves.

\section{Comparison with Layer-wise Allocation Methods}
\label{app:owl_pareto}

We compare \ours with the adapted OWL allocation on Qwen3-30B-A3B, holding EASY-EP fixed for both methods. Figure~\ref{fig:pareto} plots average benchmark accuracy against the global expert-pruning ratio.
The methods are comparable at 12.5\% pruning; \ours has higher average accuracy at 25\%, 37.5\%, and 50\%. OWL uses activation-aware weight outlier statistics, whereas SPARE uses token-level routing entropy.

\begin{figure}[t]
    \centering
    \includegraphics[width=0.7\linewidth]{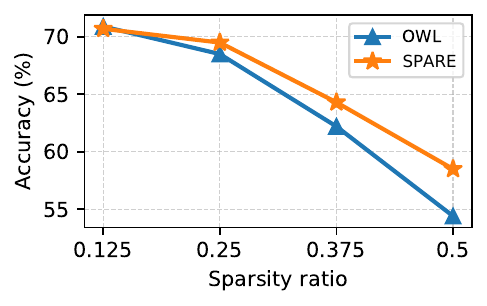}
    \vspace{-2mm}
    \caption{Average benchmark accuracy versus global expert-pruning ratio on Qwen3-30B-A3B. }
    \label{fig:pareto}
\end{figure}

\section{Proof of Main Theorem}
\label{app:proof}
\subsection{Notation and Mathematical Definitions}
\subsubsection{Data and Tokens}
Let $\mathcal{D}$ denote the target token distribution (e.g., pruning-evaluation domain).
We write $x\sim\mathcal{D}$ for a random token representation entering a given MoE layer.
In practice, $\mathcal{D}$ is approximated by a finite calibration set of $M$ tokens,
$D=\{x_1,\ldots,x_M\}$.

\subsubsection{MoE Layer, Experts, and Router}
Consider a fixed MoE layer $\ell$ with:
\begin{itemize}
  \item $N$ experts indexed by $j\in\{1,\ldots,N\}$, each mapping token input $x$ to an output vector
        $f_{\ell,j}(x)\in\mathbb{R}^d$ (dimension $d$ depends on the model).
  \item Router logits $z_\ell(x)=(z_{\ell,1}(x),\ldots,z_{\ell,N}(x))\in\mathbb{R}^N$.
\end{itemize}

\subsubsection{Top-K Routing and Gating Distribution}
Let $K<N$ be the number of selected experts per token.
Define the Top-K index set
\begin{equation}
T_\ell(x)\;\triangleq\;\operatorname{TopK}\bigl(z_\ell(x)\bigr),
  \qquad |T_\ell(x)|=K,
\end{equation}
where $\operatorname{TopK}(\cdot)$ returns indices of the $K$ largest components, using a fixed ordering of expert indices to break ties before and after pruning.

The Top-K gating distribution $g_\ell(x)$ is defined by softmax restricted to $T_\ell(x)$:
\begin{equation}
  g_{\ell,j}(x)
  \;\triangleq\;
  \begin{cases}
    \displaystyle
    \frac{e^{z_{\ell,j}(x)}}{\sum_{t\in T_\ell(x)} e^{z_{\ell,t}(x)}},
    & j\in T_\ell(x),\\[10pt]
    0, & j\notin T_\ell(x).
  \end{cases}
  \label{eq:app-topk-gate}
\end{equation}
Thus $g_{\ell,j}(x)\ge 0$ and $\sum_{j=1}^N g_{\ell,j}(x)=1$, but only $K$ entries are nonzero.

\subsubsection{Layer Output (MoE Branch)}
The MoE layer output (restricted to the expert mixture branch) is
\begin{equation}
\begin{aligned}
  y_\ell(x)
  &\triangleq\sum_{j=1}^N g_{\ell,j}(x)\,f_{\ell,j}(x)\\
  &=\sum_{t\in T_\ell(x)} g_{\ell,t}(x)\,f_{\ell,t}(x).
\end{aligned}
  \label{eq:layer-output}
\end{equation}

\subsubsection{Pruning Operation and Top-K Rerouting}
\label{app:pruning_renorm}

Let $P\subseteq\{1,\dots,N\}$ denote the removed expert set, with $|P|=m\le N-K$.
Router logits are unchanged by pruning. We recompute Top-$K$ over the remaining experts, retaining their original indices:
\[
\begin{aligned}
T_{\ell,P}(x)&\triangleq
\operatorname{TopK}_{j\notin P}\!\bigl(z_{\ell,j}(x)\bigr),\\
|T_{\ell,P}(x)|&=K.
\end{aligned}
\]
This uses the same tie-breaking rule as the original routing. The removed experts that were originally active and their gate mass are
\begin{equation}
\begin{aligned}
D_\ell(x) &\triangleq T_\ell(x)\cap P,\\
\mathcal{E}_\ell(x,P)&\triangleq
\sum_{j\in D_\ell(x)} g_{\ell,j}(x),
\end{aligned}
\label{eq:pruned_mass_def}
\end{equation}
where $\mathcal{E}_\ell(x,P)$ is the total probability mass assigned to pruned experts within the original Top-$K$ support.

The post-pruning gating distribution is softmax over the recomputed support:
\begin{equation}
g_{\ell,j}^{(P)}(x) \triangleq
\begin{cases}
\dfrac{e^{z_{\ell,j}(x)}}{\sum_{t\in T_{\ell,P}(x)}e^{z_{\ell,t}(x)}}, & j\in T_{\ell,P}(x),\\[10pt]
0, & \text{otherwise},
\end{cases}
\label{eq:renorm_gating_def}
\end{equation}
Since $N-m\ge K$, this remains well defined even when every originally active expert is removed.
The post-pruning output is $y_\ell^{(P)}(x)=\sum_j g_{\ell,j}^{(P)}(x)f_{\ell,j}(x)$, and the perturbation is $\delta_\ell(x,P)=y_\ell^{(P)}(x)-y_\ell(x)$.

\subsubsection{Routing Entropy and Concentration}
For a Top-K sparse distribution $g_\ell(x)$, define Shannon entropy
\begin{equation}
  H\!\left(g_\ell(x)\right)\;\triangleq\;-\sum_{j=1}^N g_{\ell,j}(x)\log g_{\ell,j}(x),
  \label{eq:def-entropy}
\end{equation}
where the convention $0\log 0=0$ is used.
Since only $K$ entries are nonzero, we have $0\le H(g_\ell(x))\le \log K$.

Define the layer-average entropy and routing entropy gap as in Eq.~\eqref{eq:specialization_def}.
Equivalently, let $u_{T_\ell(x)}$ denote the uniform distribution on the Top-K support $T_\ell(x)$:
$u_{T_\ell(x)}(j)=1/K$ if $j\in T_\ell(x)$ and $0$ otherwise. Then
\begin{equation}
\begin{aligned}
  \mathrm{KL}\!\left(g_\ell(x)\,\middle\|\,u_{T_\ell(x)}\right)
  &=\log K-H\!\left(g_\ell(x)\right),\\
  S_\ell
  &=\mathbb{E}_{x}\!\left[
  \mathrm{KL}\!\left(g_\ell(x)\,\middle\|\,u_{T_\ell(x)}\right)
  \right].
\end{aligned}
  \label{eq:entropy-gap-equals-kl}
\end{equation}
\textbf{Interpretation.} Larger $S_\ell$ means greater routing concentration: the gating distribution is further from uniform on its active support.

\subsection{Assumptions}
We state the assumption used to relate routing concentration to pruning perturbation.
% All constants are allowed to be layer-dependent.

% \paragraph{Assumption A0 (Top-K routing and rerouting).}
% Routing is Top-K selection followed by softmax on the selected set, as in \eqref{eq:topk-gate}.
% After pruning set $P$, rerouting recomputes Top-K among remaining experts and re-applies softmax
% as in \eqref{eq:postprune-gate}. Router logits $z_\ell(x)$ are unchanged by pruning.

\paragraph{Assumption 1 (Output scale control).}
For each layer $\ell$, there exists $B_\ell>0$ such that
\begin{equation}
  \|f_{\ell,j}(x)\|_1 \le B_\ell
  \quad \forall x,P,\ j\in T_\ell(x)\cup T_{\ell,P}(x),
  \label{eq:assump-bounded}
\end{equation}
where $P$ ranges over admissible pruning sets with $|P|\le N-K$.

\subsection{Single-Layer Pruning Perturbation}
\label{appendix:single_layer}

\paragraph{Proofs for general and deterministic selectors.}
\label{app:general_selector_proofs}
For Proposition~\ref{prop:general_selector}, the triangle inequality, bounded expert outputs, and the routing identity proved below give, pointwise for every admissible $x$ and $P$,
\[
\begin{aligned}
\|\delta_\ell(x,P)\|_1
&\le B_\ell\|g_\ell^{(P)}(x)-g_\ell(x)\|_1\\
&=2B_\ell\mathcal{E}_\ell(x,P).
\end{aligned}
\]
Squaring before taking expectations yields Eqs.~\eqref{eq:maskwise_perturbation} and~\eqref{eq:general_selector_bound} for any mask distribution $q_\ell$. A deterministic selector is the special case with all probability on its returned mask. All expectations are conditional on fixed calibration data and are taken over the stated evaluation-token distribution.

For Corollary~\ref{cor:selector_transfer}, the bracketed term in Eq.~\eqref{eq:entropy_bound_exact} is
\begin{equation}
\begin{aligned}
\Psi_\ell(m)={}&c_0(m,N)\\
&+c_1(m,N)\left[\frac{1}{K}+2S_\ell\right].
\end{aligned}
\label{eq:entropy_mass_envelope}
\end{equation}
Lemmas~\ref{lemma:3} and~\ref{lemma:4} give
\[
\begin{aligned}
\mathcal{M}_\ell(U_{\ell,m})
&=c_0(m,N)+c_1(m,N)\mathbb{E}_x[Q_\ell(x)]\\
&\le\Psi_\ell(m).
\end{aligned}
\]
Combining this with Eqs.~\eqref{eq:maskwise_perturbation} and~\eqref{eq:selector_dominance} proves Eq.~\eqref{eq:deterministic_selector_bound}. In particular, a selector that minimizes $\mathbb{E}_x[\mathcal{E}_\ell(x,P)^2]$ over the finite mask family satisfies the condition with $\varepsilon_{\mathcal A,\ell}=0$, since its minimum cannot exceed the uniform average. This observation does not establish the condition for selectors optimizing a different saliency criterion. A residual measured on calibration samples is an empirical estimate and does not by itself certify the condition on the evaluation distribution.

\paragraph{Proof Sketch for Main Theorem.}
Pruning a subset $P$ changes the Top-$K$ routing distribution from $g_\ell(x)$ to $g^{(P)}_\ell(x)$.
Let $\mathcal{E}_\ell(x,P)$ denote the total gate mass assigned to experts that are both pruned and originally selected.
A key identity shows that the routing change is exact:
$\|g^{(P)}_\ell(x)-g_\ell(x)\|_1 = 2\mathcal{E}_\ell(x,P)$ (Lemma~\ref{lemma:1}).
Since the MoE output is a convex combination of bounded expert outputs, the output perturbation satisfies
$\|\delta_\ell(x,P)\|_1^2 \le 4B_\ell^2 \mathcal{E}_\ell(x,P)^2$ (Lemma~\ref{lemma:2_1}).
Under uniform random pruning, $\mathbb{E}_P[\mathcal{E}_\ell(x,P)^2]$ decomposes into a constant term and a collision term
proportional to $Q_\ell(x)=\sum_j g_{\ell,j}(x)^2$ (Lemma~\ref{lemma:3}).
Finally, for Top-$K$ sparse routing, the collision probability is upper bounded by the entropy gap
$Q_\ell(x)\le \frac{1}{K}+2(\log K-H(g_\ell(x)))$ (Lemma~\ref{lemma:4}),
which yields the stated bound and establishes monotonic dependence on routing entropy gap.

\begin{lemma}[Exact $\ell_1$ identity]
\label{lemma:1}
Fix layer $\ell$, token $x$, and pruned set $P$.
Let $D_\ell(x)\triangleq T_\ell(x)\cap P$ be the set of pruned experts that were originally in Top-K,
and define $\mathcal{E}_\ell(x,P)=\sum_{j\in D_\ell(x)} g_{\ell,j}(x)$.
Then
\begin{equation}
  \bigl\|g^{(P)}_\ell(x)-g_\ell(x)\bigr\|_1\;=\;2\,\mathcal{E}_\ell(x,P).
  \label{eq:lemma1}
\end{equation}
\end{lemma}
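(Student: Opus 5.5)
The plan is to compare the two gating vectors coordinate by coordinate on three disjoint index sets: the removed active experts $D_\ell(x)=T_\ell(x)\cap P$, the surviving active experts $T_\ell(x)\setminus P$, and the newly recruited experts $A_\ell(x)\triangleq T_{\ell,P}(x)\setminus T_\ell(x)$. Outside $T_\ell(x)\cup T_{\ell,P}(x)$ both vectors are zero. The key step is to show that no surviving active expert loses weight. Once that holds, every term in the $\ell_1$ sum has a known sign, and the norm reduces to a mass-accounting identity.

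\emph{Step 1 (support structure).} Let $\prec$ be the strict total order on experts given by logits, with the paper's fixed tie-breaking rule. Then $\operatorname{TopK}$ returns the $K$ maximal elements under $\prec$, both before and after pruning.
\begin{itemize}
\item Take $j\in T_\ell(x)\setminus P$. At most $K-1$ experts precede $j$ in the full set, so at most $K-1$ precede it among the remaining experts. Hence $T_\ell(x)\setminus P\subseteq T_{\ell,P}(x)$.
\item It follows that $|A_\ell(x)|=K-|T_\ell(x)\setminus P|=|D_\ell(x)|$.
\item Every $a\in A_\ell(x)$ lies outside $T_\ell(x)$, so it is ranked below every element of $T_\ell(x)$. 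In particular, $z_{\ell,a}(x)\le z_{\ell,d}(x)$ for every $d\in D_\ell(x)$.
\end{itemize}
I expect this step to be the main obstacle, since it is the only place where the rerouting and tie-breaking conventions of Appendix~\ref{app:pruning_renorm} really matter. I will handle it purely through ranks, so that ties cause no trouble.

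\emph{Step 2 (normalizer does not increase).} Write $Z=\sum_{t\in T_\ell(x)}e^{z_{\ell,t}(x)}$ and $Z_P=\sum_{t\in T_{\ell,P}(x)}e^{z_{\ell,t}(x)}$. Pair the elements of $A_\ell(x)$ bijectively with those of $D_\ell(x)$; this is possible because the two sets have equal size. Step 1 then gives $\sum_{a\in A_\ell(x)}e^{z_{\ell,a}(x)}\le\sum_{d\in D_\ell(x)}e^{z_{\ell,d}(x)}$, and therefore $Z_P\le Z$. Consequently, for every $j\in T_\ell(x)\setminus P$,
\[
g^{(P)}_{\ell,j}(x)=\frac{e^{z_{\ell,j}(x)}}{Z_P}\ \ge\ \frac{e^{z_{\ell,j}(x)}}{Z}=g_{\ell,j}(x).
\]
On $A_\ell(x)$ we have $g^{(P)}_{\ell,j}(x)\ge 0=g_{\ell,j}(x)$ trivially.

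\emph{Step 3 (mass accounting).} We now evaluate the norm one region at a time.
\begin{itemize}
\item On $D_\ell(x)$, $g^{(P)}_{\ell,j}(x)=0$, so this region contributes exactly $\mathcal{E}_\ell(x,P)$.
\item On $T_{\ell,P}(x)=(T_\ell(x)\setminus P)\cup A_\ell(x)$, every difference $g^{(P)}_{\ell,j}(x)-g_{\ell,j}(x)$ is nonnegative by Step 2. The absolute values therefore sum to the plain difference of masses: $1-\sum_{j\in T_\ell(x)\setminus P}g_{\ell,j}(x)=1-(1-\mathcal{E}_\ell(x,P))=\mathcal{E}_\ell(x,P)$.
\end{itemize}
Adding the two regions gives $\|g^{(P)}_\ell(x)-g_\ell(x)\|_1=2\,\mathcal{E}_\ell(x,P)$. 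This is Eq.~\eqref{eq:lemma1}, and equivalently Lemma~\ref{lemma:exact_identity}.
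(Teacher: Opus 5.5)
Your proposal is correct and follows essentially the same route as the paper's proof. Both use the same partition into removed active experts $D_\ell(x)$, surviving active experts (shown to lie in $T_{\ell,P}(x)$), and an equal number of newly recruited experts. Both prove $Z_P\le Z$ by comparing the recruited logits against the removed ones, and both finish with the same mass-conservation argument that the nonnegative gains off $D_\ell(x)$ sum to exactly $\mathcal{E}_\ell(x,P)$. Your rank-based handling of ties is only a slightly more explicit version of the paper's remark that unchanged logits and consistent tie-breaking imply the surviving active experts stay in the new Top-$K$.
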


\begin{proof}
For brevity, suppress layer $\ell$ and token $x$ in notation.
Let $T\triangleq T_\ell(x)$ and $T_P\triangleq T_{\ell,P}(x)$.
Define
\[
D\triangleq T\cap P,\quad
U\triangleq T\setminus D,\quad
R\triangleq T_P\setminus U.
\]
Unchanged logits and consistent tie-breaking imply $U\subseteq T_P$.
Thus $T=U\cup D$ and $T_P=U\cup R$, with $|R|=|D|$ since both supports have size $K$.

Define the Top-K softmax normalizers:
\[
Z \triangleq \sum_{t\in T} e^{z_t},\qquad
Z_P \triangleq \sum_{t\in T_P} e^{z_t}.
\]
For any $u\in U$, we have
\[
\begin{aligned}
g_u &= \frac{e^{z_u}}{Z},\\
g^{(P)}_u &= \frac{e^{z_u}}{Z_P}=\alpha\,g_u,
\qquad \alpha\triangleq \frac{Z}{Z_P}.
\end{aligned}
\]
The factor $\alpha$ is common to all $u\in U$ because softmax shares the denominator $Z_P$; if $U$ is empty, the statements about its entries are vacuous.

Next, since $T$ consists of the top-$K$ logits, every $r\in R$ satisfies
$z_r \le \min_{t\in T} z_t$, while every $d\in D$ satisfies $z_d \ge \min_{t\in T} z_t$.
Because $|R|=|D|$, it follows that
\[
\begin{aligned}
\sum_{r\in R} e^{z_r}&\le \sum_{d\in D} e^{z_d}
\quad\Rightarrow\\
Z_P &= \sum_{u\in U} e^{z_u} + \sum_{r\in R} e^{z_r}\\
&\le \sum_{u\in U} e^{z_u} + \sum_{d\in D} e^{z_d}=Z.
\end{aligned}
\]
Hence $\alpha=Z/Z_P\ge 1$.

Now compute the $\ell_1$ difference by partitioning indices into $D$, $U$, $R$ and the remainder:
\begin{itemize}
  \item For $d\in D$, $g_d>0$ but $g^{(P)}_d=0$, so $|g^{(P)}_d-g_d|=g_d$.
  \item For $u\in U$, $g^{(P)}_u=\alpha g_u\ge g_u$, so $|g^{(P)}_u-g_u|=(\alpha-1)g_u$.
  \item For $r\in R$, $g_r=0$ and $g^{(P)}_r>0$, so $|g^{(P)}_r-g_r|=g^{(P)}_r$.
  \item All other indices have both weights $0$ and contribute nothing.
\end{itemize}
Therefore,
\[
\|g^{(P)}-g\|_1
= \sum_{d\in D} g_d + \sum_{u\in U}(\alpha-1)g_u + \sum_{r\in R} g^{(P)}_r.
\]
By definition, we have $L\triangleq \sum_{d\in D} g_d$.
Because $\sum_j g_j=1$ and $\sum_j g^{(P)}_j=1$, and $g^{(P)}_d=0$ for $d\in D$, we have
\[
\begin{aligned}
\sum_{i\notin D} \bigl(g^{(P)}_i-g_i\bigr)
&= \left(\sum_{i} g^{(P)}_i\right)
- \left(\sum_{d\in D} g^{(P)}_d\right)\\
&\quad- \left[\left(\sum_{i} g_i\right)
- \left(\sum_{d\in D} g_d\right)\right]\\
&=(1-0)-(1-L)=L.
\end{aligned}
\]
Moreover, for $i\notin D$ we have $g^{(P)}_i\ge g_i$ (since $U$ entries scale up and $R$ entries become positive),
so
\[
\sum_{i\notin D} |g^{(P)}_i-g_i|
= \sum_{i\notin D} (g^{(P)}_i-g_i)
= L.
\]
Combining with $\sum_{d\in D}|g^{(P)}_d-g_d|=\sum_{d\in D}g_d=L$ yields
\[
\|g^{(P)}-g\|_1 = L + L = 2L,
\]
which proves the claim.
\end{proof}

\paragraph{Intuition.}
The pruned experts in $D$ lose exactly $L$ probability mass (their original total gate mass),
and because the gating distribution must still sum to $1$, the remaining indices must gain
exactly $L$ mass in total. Since the gain occurs only on indices where $g^{(P)}\ge g$
(common scaling on retained Top-K plus newly activated replacements), the $\ell_1$ distance
equals (loss $L$) + (gain $L$) $=2L$.

\begin{lemma}[Perturbation bound: bounded-norm variant]
\label{lemma:2_1}
Under Assumption 1 in \eqref{eq:assump-bounded}, for any pruned set $P$, we have
\begin{equation}
\begin{aligned}
  \mathbb{E}_{x}[\|\delta_\ell(x,P)\|_1]
  &\le B_\ell\,\mathbb{E}_{x}
  \bigl[\|g^{(P)}_\ell(x)-g_\ell(x)\|_1\bigr]\\
  &=2B_\ell\,\mathbb{E}_{x}
  [\mathcal{E}_\ell(x,P)].
\end{aligned}
  \label{eq:lemma2-bounded}
\end{equation}
Consequently,
\begin{equation}
\begin{aligned}
  \mathbb{E}_{x}[\|\delta_\ell(x,P)\|_1^2]
  &\le 4B_\ell^2\,\mathbb{E}_{x}
  [\mathcal{E}_\ell(x,P)^2].
\end{aligned}
  \label{eq:lemma2-bounded-sq}
\end{equation}
\end{lemma}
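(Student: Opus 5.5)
The plan is to prove the bound pointwise in $x$ and then integrate over $x\sim\mathcal{D}$ for the fixed mask $P$. Fix $x$ and write
\[
\delta_\ell(x,P)=\sum_{j}\bigl(g^{(P)}_{\ell,j}(x)-g_{\ell,j}(x)\bigr)f_{\ell,j}(x).
\]
Only indices $j\in T_\ell(x)\cup T_{\ell,P}(x)$ have a nonzero coefficient, because both gating vectors vanish off their supports. Applying the triangle inequality for $\|\cdot\|_1$ gives
\[
\|\delta_\ell(x,P)\|_1\le\sum_{j\in T_\ell(x)\cup T_{\ell,P}(x)}\bigl|g^{(P)}_{\ell,j}(x)-g_{\ell,j}(x)\bigr|\,\|f_{\ell,j}(x)\|_1 .
\]
Assumption~1 in Eq.~\eqref{eq:assump-bounded} applies to exactly these indices, so each $\|f_{\ell,j}(x)\|_1$ can be replaced by $B_\ell$. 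This yields $\|\delta_\ell(x,P)\|_1\le B_\ell\|g^{(P)}_\ell(x)-g_\ell(x)\|_1$.

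Next, Lemma~\ref{lemma:1} converts the right-hand side into $2B_\ell\,\mathcal{E}_\ell(x,P)$. The result is the pointwise inequality
\[
\|\delta_\ell(x,P)\|_1\le 2B_\ell\,\mathcal{E}_\ell(x,P).
\]
Taking $\mathbb{E}_x$ of both sides, using monotonicity and linearity of expectation, gives Eq.~\eqref{eq:lemma2-bounded}, with the middle expression written as stated. Measurability is not an issue, since all quantities are finite piecewise-continuous functions of the logits and expert outputs, and everything is bounded by $2B_\ell$ and hence integrable.

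For the squared version, both sides of the pointwise inequality are nonnegative, so I will square it to get $\|\delta_\ell(x,P)\|_1^2\le 4B_\ell^2\,\mathcal{E}_\ell(x,P)^2$ and then take expectations. It is important to square before averaging. Squaring the averaged inequality would only control $(\mathbb{E}_x\|\delta_\ell\|_1)^2$, and by Jensen that is too weak to bound $\mathbb{E}_x\|\delta_\ell\|_1^2$. The squared integrand is again bounded by $4B_\ell^2$, so the expectation is finite.

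There is no real obstacle here. The one point needing care is that the bound on the expert outputs must cover the newly activated experts $R=T_{\ell,P}(x)\setminus T_\ell(x)$ as well as the original support. This is exactly why Assumption~1 is stated over $T_\ell(x)\cup T_{\ell,P}(x)$, and I will note explicitly that this is where it is used. I will also observe that the bound holds for every fixed admissible $P$ with $|P|=m\le N-K$, so that $T_{\ell,P}(x)$ is well defined with size $K$. This uniformity in $P$ is what the later results need when they additionally average over mask distributions $q_\ell$.
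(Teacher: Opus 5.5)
Your proposal is correct and matches the paper's proof: you use the triangle inequality with Assumption~1 on $T_\ell(x)\cup T_{\ell,P}(x)$, then Lemma~\ref{lemma:1} to get the pointwise bound $\|\delta_\ell(x,P)\|_1\le 2B_\ell\,\mathcal{E}_\ell(x,P)$, and then take expectations. For the second-moment version you square pointwise before averaging, which is exactly what the paper does.
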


\begin{proof}
By definition,
\[
\delta_\ell(x,P)=\sum_{j=1}^N \bigl(g^{(P)}_{\ell,j}(x)-g_{\ell,j}(x)\bigr) f_{\ell,j}(x).
\]
Apply triangle inequality we have:
\[
\|\delta_\ell(x,P)\|_1
\le \sum_{j=1}^N |g^{(P)}_{\ell,j}(x)-g_{\ell,j}(x)|\,\|f_{\ell,j}(x)\|_1
\]
Assumption 1 gives the pointwise inequality
$\|\delta_\ell(x,P)\|_1\le B_\ell\|g^{(P)}_\ell(x)-g_\ell(x)\|_1=2B_\ell\mathcal{E}_\ell(x,P)$.
Taking expectation over $x$ yields
\[
\mathbb{E}_{x}[\|\delta_\ell(x,P)\|_1]
\le B_\ell\,\mathbb{E}_{x}
[\|g^{(P)}_\ell(x)-g_\ell(x)\|_1].
\]
The equality in \eqref{eq:lemma2-bounded} follows from Lemma~\ref{lemma:1}.
Squaring the pointwise inequality before taking expectation yields \eqref{eq:lemma2-bounded-sq}.
\end{proof}

\iffalse
\begin{lemma}[Perturbation bound: second-moment variant]
\label{lemma:2_2}
Under Assumption \eqref{eq:assump-second-moment}, for any pruned set $P$,
\begin{equation}
  \mathbb{E}_{x}\!\left[\|\delta_\ell(x,P)\|^2\right]
  \;\le\;
  4\,\mathbb{E}_{x}\!\left[\mathcal{E}_\ell(x,P)^2\cdot \sum_{j\in T_\ell(x)\cup T_{\ell,P}(x)}\|f_{\ell,j}(x)\|^2\right]
  \;\le\;
  4\,\Sigma_\ell^2\cdot \mathbb{E}_x\!\left[\mathcal{E}_\ell(x,P)^2\right].
  \label{eq:lemma2-secondmoment}
\end{equation}
\end{lemma}

\begin{proof}
Let $\Delta g_{\ell,j}(x)\triangleq g^{(P)}_{\ell,j}(x)-g_{\ell,j}(x)$.
Then $\Delta g_{\ell,j}(x)\neq 0$ only on indices in $T_\ell(x)\cup T_{\ell,P}(x)$.
Using Cauchy--Schwarz,
\[
\|\delta_\ell(x,P)\|
= \left\|\sum_{j} \Delta g_{\ell,j}(x) f_{\ell,j}(x)\right\|
\le \sum_j |\Delta g_{\ell,j}(x)|\,\|f_{\ell,j}(x)\|
\le \|\Delta g_\ell(x)\|_2\cdot \left(\sum_{j\in T_\ell(x)\cup T_{\ell,P}(x)}\|f_{\ell,j}(x)\|^2\right)^{1/2}.
\]
Square both sides and use $\|\Delta g\|_2 \le \|\Delta g\|_1 = 2\mathcal{E}_\ell(x,P)$ (Lemma \ref{lemma:1}):
\[
\|\delta_\ell(x,P)\|^2
\le \|\Delta g_\ell(x)\|_2^2 \sum_{j\in T_\ell(x)\cup T_{\ell,P}(x)}\|f_{\ell,j}(x)\|^2
\le (2\mathcal{E}_\ell(x,P))^2 \sum_{j\in T_\ell(x)\cup T_{\ell,P}(x)}\|f_{\ell,j}(x)\|^2.
\]
Taking expectation over $x$ and applying Assumption \eqref{eq:assump-second-moment} yields the claim.
\end{proof}

\fi

\paragraph{Intuition.}
The MoE output is a convex combination of expert outputs. Pruning changes only the mixing weights.
Thus, the output perturbation is controlled by (i) how much the gating distribution changes
($\|g^{(P)}-g\|$), and (ii) the scale of expert outputs. Lemma \ref{lemma:1} makes (i) exact in terms of the
pruned gate mass $L$.

% \subsection{Lemma 3: Random Pruning Moment and Collision Mass}
% Define the collision mass
% \begin{equation}
%   Q_\ell(x)\;\triangleq\;\sum_{j=1}^N g_{\ell,j}(x)^2.
%   \label{eq:def-collision}
% \end{equation}
% We have the following lemma:
\begin{lemma}[uniform random pruning]
\label{lemma:3}
% Exact $\mathbb{E}\left[L^2\right]$ for uniform random pruning.
Fix $m\in\{1,\ldots,N-1\}$. Let $P$ be drawn uniformly at random from all subsets of size $m$.
Then for any fixed $x$,
\begin{equation}
\begin{gathered}
  \mathbb{E}_{P}\!\left[\mathcal{E}_\ell(x,P)^2 \,\middle|\, x\right]\\
  =c_0(m,N)+c_1(m,N)\,Q_\ell(x),
\end{gathered}
  \label{eq:EL2}
\end{equation}
where
\begin{equation}
\begin{aligned}
  c_0(m,N)&\triangleq\frac{m(m-1)}{N(N-1)},\\
  c_1(m,N)&\triangleq\frac{m(N-m)}{N(N-1)},\\
  Q_\ell(x)&\triangleq\sum_{j=1}^N g_{\ell,j}(x)^2.
\end{aligned}
  \label{eq:c0c1}
\end{equation}
\end{lemma}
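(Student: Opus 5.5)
The plan is to write the pruned mass with indicator variables and compute its second moment by expanding the square. Fix $x$ and abbreviate $g_j=g_{\ell,j}(x)$. Since $g_j=0$ for $j\notin T_\ell(x)$, we may sum over all $j$ and write
\[
\mathcal{E}_\ell(x,P)=\sum_{j\in T_\ell(x)\cap P} g_j=\sum_{j=1}^N g_j\,\mathbf{1}\{j\in P\}.
\]
Squaring gives a diagonal part and an off-diagonal part,
\[
\mathcal{E}_\ell(x,P)^2=\sum_{j} g_j^2\,\mathbf{1}\{j\in P\}+\sum_{i\neq j} g_ig_j\,\mathbf{1}\{i\in P\}\mathbf{1}\{j\in P\},
\]
where the diagonal uses $\mathbf{1}\{j\in P\}^2=\mathbf{1}\{j\in P\}$. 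Because $P$ is drawn independently of $x$, conditioning on $x$ leaves only the randomness of $P$, and linearity of expectation reduces everything to the first two inclusion moments of a uniform $m$-subset.

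These moments are standard. For a fixed $j$, $\Pr[j\in P]=\binom{N-1}{m-1}/\binom{N}{m}=m/N$. For a fixed pair $i\neq j$, $\Pr[i,j\in P]=\binom{N-2}{m-2}/\binom{N}{m}=\frac{m(m-1)}{N(N-1)}$. The pair probability is valid for all $m\in\{1,\ldots,N-1\}$ and vanishes when $m=1$.

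Substituting these into the expansion gives
\[
\mathbb{E}_P[\mathcal{E}^2\mid x]=\frac{m}{N}\,Q_\ell(x)+\frac{m(m-1)}{N(N-1)}\sum_{i\ne j}g_ig_j .
\]
Since $\sum_j g_j=1$, we have $\sum_{i\ne j}g_ig_j=1-Q_\ell(x)$, so the right-hand side equals
\[
c_0(m,N)+\Bigl(\frac{m}{N}-\frac{m(m-1)}{N(N-1)}\Bigr)Q_\ell(x).
\]
It remains to simplify the coefficient:
\[
\frac{m}{N}-\frac{m(m-1)}{N(N-1)}=\frac{m\bigl[(N-1)-(m-1)\bigr]}{N(N-1)}=\frac{m(N-m)}{N(N-1)}=c_1(m,N),
\]
which is exactly the identity in Eq.~\eqref{eq:EL2}.

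None of these steps is a real obstacle; the computation is routine. The only points needing care are that $g$ is supported on $T_\ell(x)$ and sums to one, so the off-diagonal sum collapses to $1-Q_\ell(x)$, and that the rerouting rule plays no role here, because $\mathcal{E}_\ell$ depends only on the original gate weights and on $P$.
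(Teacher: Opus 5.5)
Your proposal is correct and follows the paper's proof essentially step for step: you write $\mathcal{E}_\ell$ as a sum of inclusion indicators, expand the square, and use the inclusion probabilities $m/N$ and $\frac{m(m-1)}{N(N-1)}$. You then collapse the cross terms to $1-Q_\ell(x)$ and simplify the coefficient to $c_1(m,N)$. The only cosmetic difference is that you sum over ordered pairs $i\neq j$, whereas the paper uses $2\sum_{j<k}$ with $\sum_{j<k}g_jg_k=\tfrac12(1-Q)$.
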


\begin{proof}
Let $I_j=\mathbf{1}\{j\in P\}$ be the inclusion indicator. Then
\[
\mathcal{E}_\ell(x,P)=\sum_{j=1}^N I_j\,g_{\ell,j}(x).
\]
Therefore
\[
\mathcal{E}_\ell(x,P)^2 = \sum_{j} I_j g_j^2 + 2\sum_{j<k} I_j I_k g_j g_k,
\]
where we suppress $\ell,x$ in $g$ for readability.
For uniform sampling without replacement, $\mathbb{E}[I_j]=m/N$ and
$\mathbb{E}[I_j I_k]=\frac{m(m-1)}{N(N-1)}$ for $j\neq k$.
Taking expectation over $P$ conditional on $x$:
\[
\mathbb{E}_P[L^2|x]
= \frac{m}{N}\sum_j g_j^2
+ 2\cdot\frac{m(m-1)}{N(N-1)}\sum_{j<k} g_j g_k.
\]
Note that $\sum_{j<k} g_j g_k = \frac{1}{2}\left[(\sum_j g_j)^2 - \sum_j g_j^2\right]
= \frac{1}{2}(1 - Q)$ since $\sum_j g_j=1$.
Hence
\[
\begin{aligned}
\mathbb{E}_P[L^2|x]
&= \frac{m}{N}Q + \frac{m(m-1)}{N(N-1)}(1-Q)\\
&= \frac{m(m-1)}{N(N-1)}\\
&\quad+ \left[\frac{m}{N}-\frac{m(m-1)}{N(N-1)}\right]Q.
\end{aligned}
\]
Simplifying the coefficient yields $c_1(m,N)=\frac{m(N-m)}{N(N-1)}$, proving \eqref{eq:EL2}.
\end{proof}

\paragraph{Intuition.}
Randomly pruning $m$ experts is like sampling $m$ indices. The second moment of the removed
probability mass depends on how concentrated $g$ is. Concentration is captured by collision mass $Q=\|g\|_2^2$.

% \subsection{Lemma 4: Collision Mass Upper Bounded by Entropy Gap}
\begin{lemma}[Entropy--collision bound for Top-K sparse distributions]
\label{lemma:4}
For any Top-K sparse distribution $g_\ell(x)$ (support size at most $K$),
\begin{equation}
\begin{aligned}
  Q_\ell(x)&=\sum_j g_{\ell,j}(x)^2\\
  &\le\frac{1}{K}+2\Bigl(\log K-H(g_\ell(x))\Bigr).
\end{aligned}
  \label{eq:Q-vs-entropy}
\end{equation}
\end{lemma}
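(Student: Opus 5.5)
The plan is to compare $g=g_\ell(x)$ with the uniform distribution $u=u_{T_\ell(x)}$ on its support. The main text already signals that Pinsker's inequality is the intended tool. The first step is an exact decomposition of the collision mass around $u$. Since $g$ and $u$ live on the same $K$-point support $T_\ell(x)$, we have $\sum_j g_j u_j = 1/K$ and $\sum_j u_j^2=1/K$. Expanding then gives
\[
Q_\ell(x)=\sum_j g_j^2=\frac{1}{K}+\|g-u\|_2^2 .
\]
This reduces the claim to showing $\|g-u\|_2^2\le 2(\log K-H(g))$. If the support of $g$ is strictly smaller than $K$ (only possible if some gate is zero), the same identity holds with $u$ taken uniform on any $K$-set containing the support, so there is no loss of generality.

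The second step is to pass from $\ell_2$ to $\ell_1$ and then to KL. Since every coordinate of $g-u$ has absolute value at most $1$, we have $\|g-u\|_2^2\le\|g-u\|_\infty\|g-u\|_1\le\|g-u\|_1$. More sharply, $\|g-u\|_2\le\|g-u\|_1$, so $\|g-u\|_2^2\le\|g-u\|_1^2$. Pinsker's inequality gives $\|g-u\|_1^2\le 2\,\mathrm{KL}(g\|u)$, and Eq.~\eqref{eq:entropy-gap-equals-kl} identifies $\mathrm{KL}(g\|u)=\log K-H(g)$. Chaining these yields
\[
Q_\ell(x)\le \frac1K+\|g-u\|_1^2\le \frac1K+2\bigl(\log K-H(g_\ell(x))\bigr),
\]
which is exactly Eq.~\eqref{eq:Q-vs-entropy}.

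The only step needing care is the comparison $\|v\|_2\le\|v\|_1$, which holds for any vector. It is the crude link that lets the natural-log Pinsker constant appear with factor $2$. One could instead use $\|v\|_2^2\le \|v\|_1$ together with the square root of KL, but that would give a $\sqrt{\cdot}$ form rather than the stated linear one. So I would commit to the $\ell_1^2$ route. I would also double-check that Pinsker is applied with natural logarithms, consistent with the entropy definition, so that the constant is exactly $2$ (from $\mathrm{TV}^2\le \mathrm{KL}/2$ with $\mathrm{TV}=\tfrac12\|\cdot\|_1$).
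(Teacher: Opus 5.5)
Your proposal is correct and follows the paper's proof step for step: decompose $Q_\ell(x)=\frac{1}{K}+\|g-u\|_2^2$ around the uniform distribution on the Top-$K$ support, then bound $\|g-u\|_2^2\le\|g-u\|_1^2\le 2\,\mathrm{KL}(g\|u)$ using $\|v\|_2\le\|v\|_1$ and Pinsker, and finally use $\mathrm{KL}(g\|u)=\log K-H(g)$. One wording point: $\|g-u\|_2^2\le\|g-u\|_1^2$ is not in general ``sharper'' than $\|g-u\|_2^2\le\|g-u\|_1$, since $\|g-u\|_1$ can be as large as $2(1-1/K)>1$; you are still right to use the squared-$\ell_1$ form, because that is what Pinsker turns into the linear entropy-gap term.
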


\begin{proof}
Let $T=T_\ell(x)$ be the support (Top-K set) of $g=g_\ell(x)$.
Let $u$ be the uniform distribution on $T$, i.e., $u_j=1/K$ for $j\in T$.
Then
\[
\begin{aligned}
Q=\|g\|_2^2
&\overset{1}{=}\|u\|_2^2+\|g-u\|_2^2\\
&=\frac{1}{K}+\|g-u\|_2^2,
\end{aligned}
\]
where the equality 1 holds because:
\[
\begin{aligned}
\|g-u\|_2^2
&=\|g\|_2^2+\|u\|_2^2-2g\cdot u\\
&=\|g\|_2^2+\frac{1}{K}-\frac{2}{K}\\
&=\|g\|_2^2-\|u\|_2^2.
\end{aligned}
\]

Using $\|v\|_2\le \|v\|_1$ and Pinsker's inequality $\|g-u\|_1^2\le 2\mathrm{KL}(g\|u)$,
we have
\[
\|g-u\|_2^2 \le \|g-u\|_1^2 \le 2\mathrm{KL}(g\|u).
\]
Finally, $\mathrm{KL}(g\|u)=\log K - H(g)$ (by direct algebra, see \eqref{eq:entropy-gap-equals-kl}),
which yields \eqref{eq:Q-vs-entropy}.
\end{proof}

\paragraph{Intuition.}
Among distributions on $K$ points, the uniform distribution minimizes collision mass ($Q=1/K$).
Deviations from uniformity increase $Q$, and the entropy gap $\log K-H(g)$ quantifies such deviation.

% \section{Main Theorem: Specialization Controls Random-Pruning Perturbation}
With these lemmas, we can now prove our main theorem.
\begin{theorem}[Expected perturbation upper bound under random pruning]
Fix layer $\ell$ with $N$ experts and Top-$K$ rerouting as defined in Section~\ref{app:pruning_renorm}.
Let $P$ be uniformly random among subsets of size $m\le N-K$, independently of $x$.
Under Assumption~1, we have:
% \begin{equation}
\begin{align}
  &\mathbb{E}_{x,P}\!\left[\|\delta_\ell(x,P)\|_1^2\right]
  \nonumber\\[-2pt]
  &\le 4B_\ell^2\left(
    c_0(m,N)+c_1(m,N)\,\mathbb{E}_x[Q_\ell(x)]
  \right) \nonumber \\
  &\le 4B_\ell^2\left(
    c_0(m,N)+c_1(m,N)
    \left[\frac{1}{K}+2S_\ell\right]
  \right),
  \label{eq:main-bound}
\end{align}
% \end{equation}
with $c_0,c_1$ as in \eqref{eq:c0c1} and $\bar H_\ell=\mathbb{E}_x[H(g_\ell(x))]$.

% \textbf{(Second-moment variant).} Under \eqref{eq:assump-second-moment},
% \begin{equation}
%   \mathbb{E}_{x,P}\!\left[\|\delta_\ell(x,P)\|^2\right]
%   \;\le\;
%   4\Sigma_\ell^2
%   \left(
%     c_0(m,N) + c_1(m,N)\left[\frac{1}{K}+2\bigl(\log K-\bar H_\ell\bigr)\right]
%   \right).
%   \label{eq:main-bound-2nd}
% \end{equation}
\end{theorem}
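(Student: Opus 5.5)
The plan is to chain the four lemmas just established, conditioning on $x$ first and averaging over $x$ last. Because $P$ is drawn independently of $x$, the joint expectation can be written as the iterated expectation $\mathbb{E}_{x,P}[\cdot]=\mathbb{E}_x\bigl[\mathbb{E}_P[\cdot\mid x]\bigr]$, and all nonnegative quantities involved are bounded, so Tonelli justifies the exchange.

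\textbf{Step 1 (pointwise routing bound).} Fix $x$ and an admissible $P$. Lemma~\ref{lemma:1} and Lemma~\ref{lemma:2_1}, using Assumption~1, give the deterministic inequality $\|\delta_\ell(x,P)\|_1\le 2B_\ell\,\mathcal{E}_\ell(x,P)$. Both sides are nonnegative, so we may square before averaging: $\|\delta_\ell(x,P)\|_1^2\le 4B_\ell^2\,\mathcal{E}_\ell(x,P)^2$. Taking $\mathbb{E}_P[\cdot\mid x]$ preserves this inequality. The hypothesis $m\le N-K$ is needed here, because it keeps the recomputed support $T_{\ell,P}(x)$ well defined and thus makes Lemma~\ref{lemma:1} applicable.

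\textbf{Step 2 (random-mask moment).} Apply Lemma~\ref{lemma:3} conditionally on $x$. This gives $\mathbb{E}_P[\mathcal{E}_\ell(x,P)^2\mid x]=c_0(m,N)+c_1(m,N)Q_\ell(x)$. Lemma~\ref{lemma:3} is stated for $m\ge1$, so the case $m=0$ must be handled separately. In that case $\delta_\ell\equiv0$ and $c_0=c_1=0$, so the bound is trivial. Averaging over $x$ and using linearity yields the first inequality in \eqref{eq:main-bound}.

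\textbf{Step 3 (entropy conversion).} Since $m\le N-K<N$, we have $c_1(m,N)\ge0$. We can therefore substitute the pointwise bound of Lemma~\ref{lemma:4}, $Q_\ell(x)\le \tfrac1K+2(\log K-H(g_\ell(x)))$, without reversing the inequality. Taking the expectation over $x$ gives $\mathbb{E}_x[Q_\ell(x)]\le \tfrac1K+2(\log K-\bar H_\ell)=\tfrac1K+2S_\ell$, by the definition of $S_\ell$ in Eq.~\eqref{eq:specialization_def}. This is the second inequality in \eqref{eq:main-bound}.

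No step is a genuine obstacle, since the analytic work is already contained in the lemmas. The points that need care are bookkeeping:
\begin{itemize}
\item verifying the sign of $c_1$, so that the monotone substitution in Step~3 is valid;
\item squaring pointwise rather than after averaging, since Jensen would run the wrong way;
\item the independence of $P$ and $x$, which makes the conditional application of Lemma~\ref{lemma:3} legitimate.
\end{itemize}
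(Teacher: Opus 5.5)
Your proposal is correct and follows essentially the same route as the paper's proof: square the pointwise bound $\|\delta_\ell(x,P)\|_1\le 2B_\ell\,\mathcal{E}_\ell(x,P)$ from Lemmas~\ref{lemma:1} and~\ref{lemma:2_1}, apply Lemma~\ref{lemma:3} conditionally on $x$, then apply Lemma~\ref{lemma:4} and average over $x$. The extra bookkeeping you supply is valid and only implicit in the paper: the $m=0$ case, the sign of $c_1(m,N)$, and the use of independence of $P$ and $x$.
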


\begin{proof}
Combine Lemma \ref{lemma:2_1} with Lemma \ref{lemma:3} and Lemma \ref{lemma:4}.
Specifically, for any admissible pruned set $P$, Lemma~\ref{lemma:2_1} gives $\mathbb{E}_{x}[\|\delta_\ell(x,P)\|_1^2] \le 4B_\ell^2\mathbb{E}_{x}[\mathcal{E}_\ell(x,P)^2]$.
Taking expectation over random $P$ conditional on $x$ and applying Lemma \ref{lemma:3} yields dependence on $Q_\ell(x)$.
Finally, Lemma \ref{lemma:4} upper bounds $Q_\ell(x)$ by a function of $\log K - H(g_\ell(x))$, and taking expectation over $x$
gives $\log K - \bar H_\ell$.
\end{proof}

\paragraph{Interpretation.}
For fixed $(m,N,K)$ and layer scale $B_\ell$, the upper bound increases monotonically with
\[
S_\ell=\log K-\bar H_\ell.
\]
Thus, for fixed $(m,N,K,B_\ell)$, a larger entropy gap increases this upper bound, without establishing an ordering of actual perturbations across layers.

\begin{corollary}[Random-to-deterministic implication]
\label{cor:random_to_deterministic}
For a fixed layer $\ell$ and pruning budget $m$, define the deterministic-mask error
\[
\mathcal{E}_\ell(P)
=
\mathbb{E}_{x\sim\mathcal{D}}\!\left[\|\delta_\ell(x,P)\|_1^2\right],
\qquad
P\in\mathcal{P}_{\ell,m}.
\]
Then
\begin{equation}
\begin{aligned}
\min_{P\in\mathcal{P}_{\ell,m}}\mathcal{E}_\ell(P)
&\le \mathbb{E}_{P\sim\mathrm{Unif}(\mathcal{P}_{\ell,m})}
\mathcal{E}_\ell(P)\\
&\le 4B_\ell^2\Bigl(
c_0(m,N)\\
&\qquad+c_1(m,N)
\left[\frac{1}{K}+2S_\ell\right]
\Bigr).
\end{aligned}
\label{eq:deterministic_existence}
\end{equation}
\end{corollary}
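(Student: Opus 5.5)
The corollary chains two inequalities, and I would prove them separately.

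The first inequality, $\min_{P}\mathcal{E}_\ell(P)\le \mathbb{E}_{P\sim\mathrm{Unif}(\mathcal{P}_{\ell,m})}\mathcal{E}_\ell(P)$, is an averaging argument. The family $\mathcal{P}_{\ell,m}$ is finite and nonempty, with $\binom{N}{m}$ masks, so the minimum is attained. Every term in the uniform average is at least that minimum. The average is therefore also at least the minimum. This already yields the existence of a deterministic mask achieving the bound; no further construction is required.

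The second inequality is the random-pruning theorem stated just above (equivalently Theorem~\ref{thm:single_layer_entropy}). The only point to verify is that $\mathbb{E}_{P\sim\mathrm{Unif}}\mathcal{E}_\ell(P)$ equals the joint expectation $\mathbb{E}_{x,P}[\|\delta_\ell(x,P)\|_1^2]$ appearing there. I would justify this by Fubini/Tonelli: $P$ is independent of $x$, ranges over a finite set, and the integrand is nonnegative. Hence
\[
\mathbb{E}_{P}\,\mathbb{E}_{x}\bigl[\|\delta_\ell(x,P)\|_1^2\bigr]=\mathbb{E}_{x,P}\bigl[\|\delta_\ell(x,P)\|_1^2\bigr].
\]
To keep the argument self-contained, I would spell out the chain behind the theorem:
\begin{itemize}
\item Lemma~\ref{lemma:2_1} gives the pointwise bound $\|\delta_\ell(x,P)\|_1^2\le 4B_\ell^2\mathcal{E}_\ell(x,P)^2$, using the exact identity of Lemma~\ref{lemma:1}.
\item Lemma~\ref{lemma:3} evaluates the conditional average over $P$ as $c_0(m,N)+c_1(m,N)Q_\ell(x)$.
\item Lemma~\ref{lemma:4} bounds $Q_\ell(x)\le 1/K+2(\log K-H(g_\ell(x)))$. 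Since $c_1\ge 0$, averaging over $x$ replaces this by $1/K+2S_\ell$.
\end{itemize}

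The main obstacle is notational rather than mathematical. The symbol $\mathcal{E}_\ell(P)$ here denotes the mask-level squared output error. It must be kept distinct from the gate mass $\mathcal{E}_\ell(x,P)$, so that the bound is applied to the right object.

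A further technicality arises when $m=0$ or when Lemma~\ref{lemma:3} requires $m\ge 1$. In the case $m=0$, the family contains only the empty mask. Both sides are then trivially zero, since $c_0=c_1=0$ and $\delta_\ell=0$. I would dispose of this case separately.
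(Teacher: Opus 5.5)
Your proposal is correct and follows the paper's argument. The paper bounds the minimum over the finite family $\mathcal{P}_{\ell,m}$ by its uniform average, and identifies that average, via Tonelli, with the joint expectation already bounded in the random-pruning theorem through Lemmas~\ref{lemma:2_1}, \ref{lemma:3}, and~\ref{lemma:4}. Your separate handling of $m=0$ and your distinction between the mask-level error $\mathcal{E}_\ell(P)$ and the gate mass $\mathcal{E}_\ell(x,P)$ are harmless details that the paper leaves implicit.
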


\noindent
The minimum over the finite mask family cannot exceed its uniform average. This proves the existence of at least one deterministic mask satisfying the entropy-dependent bound, without identifying a particular selector's output.

\subsection{End-to-End Expert Pruning Error Bound via Discrete Gronwall}
\label{appendix:e2e_bound}

% \paragraph{Assumption A1 (Layer output scale control; same as Assumption 1).}
% For each layer $\ell$, there exists $B_\ell>0$ such that for all $x$ in the considered domain and
% all experts that may appear in Top-$K$ supports before/after pruning,
% \begin{equation}
% \| f_{\ell,j}(x)\|_1 \le B_\ell .
% \end{equation}

% \paragraph{Single-layer bound (recalled).}
% Under uniform random pruning $P_\ell$ among all subsets of size $m_\ell$,
% the prior single-layer theorem gives
% \begin{equation}
% \mathbb{E}_{x\sim\mathcal{D}_\ell,\,P_\ell}\big[\|\delta_\ell(x,P_\ell)\|_1^2\big]
% \;\le\; 4B_\ell^2\left(
% c_0(m_\ell,N_\ell)+c_1(m_\ell,N_\ell)\left[\frac{1}{K}+2(\log K-\bar H_\ell)\right]
% \right),
% \label{eq:single_layer_bound_recall}
% \end{equation}
% where
% \begin{equation}
% c_0(m,N)\triangleq \frac{m(m-1)}{N(N-1)},\qquad
% c_1(m,N)\triangleq \frac{m(N-m)}{N(N-1)}.
% \end{equation}

\subsubsection{Model: Residual Network with MoE Blocks}
Let $h_0 \in \mathbb{R}^d$ denote the input representation. We define a residual network of depth $L$ with residual scaling factor $\alpha > 0$ according to the recursion:
\begin{equation}
h_{\ell+1} = h_\ell + \alpha F_\ell(h_\ell), \quad \ell=0,1,\dots,L-1,
\label{eq:orig_residual}
\end{equation}
where $F_\ell$ represents the layer transition map. We assume $F_\ell$ decomposes into an unpruned component $A_\ell$ (encapsulating operations such as attention and normalization) and a modifiable component $y_\ell$ (e.g., a Mixture-of-Experts branch). Under a pruning policy $P_\ell$, the original and pruned transformations are defined respectively as:
\begin{equation}
\begin{aligned}
F_\ell(h) &= A_\ell(h) + y_\ell(h),\\
\hat F_\ell(h) &= A_\ell(h) + \tilde y_{\ell,P_\ell}(h).
\end{aligned}
\label{eq:F_split}
\end{equation}
Consequently, for any input $h$, the approximation error introduced by pruning is given by:
\begin{equation}
\hat F_\ell(h) - F_\ell(h) = \tilde y_{\ell,P_\ell}(h) - y_\ell(h) \coloneqq \delta_\ell(h,P_\ell).
\label{eq:injection_error_def}
\end{equation}
The dynamics of the pruned network are therefore governed by the recursion:
\begin{equation}
\hat h_{\ell+1} = \hat h_\ell + \alpha,\hat F_\ell(\hat h_\ell), \quad \ell=0,1,\dots,L-1,
\label{eq:pruned_residual}
\end{equation}
initialized with the same state $\hat h_0 = h_0$.

% Let $h_0$ be the input representation.
% We consider a depth-$L$ residual recursion with residual scaling $\alpha>0$:
% \begin{equation}
% h_{\ell+1} = h_\ell + \alpha\,F_\ell(h_\ell),\qquad \ell=0,1,\dots,L-1,
% \label{eq:orig_residual}
% \end{equation}
% where $F_\ell(\cdot)$ is the layer transform.
% We assume that pruning affects only the MoE branch, hence we can write
% \begin{equation}
% F_\ell(h) \;=\; A_\ell(h)+y_\ell(h),\qquad
% \hat F_\ell(h) \;=\; A_\ell(h)+\tilde y_{\ell,P_\ell}(h),
% \label{eq:F_split}
% \end{equation}
% where $A_\ell(\cdot)$ collects all unpruned components in the block (e.g., attention, LN, etc.).
% Therefore, for any input $h$,
% \begin{equation}
% \hat F_\ell(h)-F_\ell(h) \;=\; \tilde y_{\ell,P_\ell}(h)-y_\ell(h) \;=\; \delta_\ell(h,P_\ell).
% \label{eq:injection_error_def}
% \end{equation}

% Define the pruned network recursion
% \begin{equation}
% \hat h_{\ell+1} = \hat h_\ell + \alpha\,\hat F_\ell(\hat h_\ell),\qquad \ell=0,1,\dots,L-1,
% \label{eq:pruned_residual}
% \end{equation}
% with the same initial state $\hat h_0=h_0$.

%-----------------------------------------------------------------------
% Assumption A2: Lipschitz continuity (standardized \ell_1 notation)
%-----------------------------------------------------------------------
\paragraph{Assumption A2 (Local Lipschitz continuity of pruned blocks).}
For each layer $\ell$, assume there is a deterministic constant $\lambda_\ell\ge0$ such that the following inequality holds for every admissible pruning mask. The common domain $\mathcal{D}\subseteq\mathbb{R}^d$ contains both the original and pruned layer-input trajectories for all masks under consideration:
\begin{equation}
\begin{aligned}
    \bigl\|\hat F_\ell(u) - \hat F_\ell(v)\bigr\|_{1}
    &\le \lambda_\ell\,\bigl\|u - v\bigr\|_{1},\\
    &\hspace{1.5cm}\forall\,u, v \in \mathcal{D}.
\end{aligned}
    \label{eq:lipschitz_assump}
\end{equation}

\begin{remark}[Interpretation and practical scope of Assumption~A2]
Assumption~A2 imposes a \emph{local stability} condition: within the relevant trajectory domain $\mathcal{D}$
(e.g., the set of layer inputs reachable under the target data distribution),
the pruned block $\hat F_\ell$ does not amplify input perturbations by more than a factor $\lambda_\ell$
in the $\ell_1$-norm. In our subsequent error-propagation analysis, $\lambda_\ell$ determines the
\emph{propagation factor} $\prod_{k > \ell} (1 + \alpha\lambda_k)$ (or its exponential upper bound),
which quantifies the extent to which upstream perturbations introduced at layer $\ell$ are magnified by subsequent layers.
% Note that for Transformer or MoE blocks employing Top-$K$ routing, global Lipschitz continuity on $\mathbb{R}^d$ generally does not hold;
% we therefore require the inequality only on a bounded or data-induced domain $\mathcal{D}$, consistent with
% standard analyses of local trajectory stability.
\end{remark}

%-----------------------------------------------------------------------
% Assumption A3: Layer-wise independent random pruning (clarified notation)
%-----------------------------------------------------------------------
\paragraph{Assumption A3 (Layer-wise independent random pruning).}
The sequence of pruning masks $\{P_\ell\}_{\ell=0}^{L-1}$ forms a collection of mutually independent
random variables. For each layer $\ell$, the mask $P_\ell$ is uniformly distributed over the set of all
subsets of indices $\{1, \dots, N_\ell\}$ having cardinality $m_\ell$, i.e.,
\begin{equation}
\begin{aligned}
    P_\ell &\sim \mathrm{Unif}\Bigl(
    \bigl\{P \subseteq \{1,\dots,N_\ell\}: |P|=m_\ell\bigr\}
    \Bigr),\\
    &\hspace{1.5cm}\ell=0,1,\dots,L-1.
\end{aligned}
\end{equation}
Furthermore, the entire sequence $\{P_\ell\}_{\ell=0}^{L-1}$ is statistically independent of the
input initialization $h_0$.

\begin{remark}[Role of Assumption~A3 and deterministic pruning]
Assumption~A3 permits closed-form evaluation of the local second moments under uniform random pruning. The error-propagation recursion and discrete-Gronwall bound remain valid for every fixed pruning sequence. Under Assumption~A2, their expected form also applies to an arbitrary mask distribution independent of the evaluation input, as in Eq.~\eqref{eq:e2e_general_selector}; independence between layer masks is not required for that general bound. For a deterministic selector, the explicit entropy form additionally requires Eq.~\eqref{eq:selector_dominance}, giving a local term $\Psi_\ell(m_\ell)+\varepsilon_{\mathcal A,\ell}$. The uniform random pruning theorem below is the special case with local term $\Psi_\ell(m_\ell)$.
\end{remark}

\subsubsection{Key Lemmas: One-step recursion and discrete Gronwall bound}

\begin{lemma}[One-step error decomposition]
\label{lemma:one_step}
Let $\mathcal{E}_\ell \triangleq \|\hat h_\ell - h_\ell\|_1$ denote the $\ell_1$-norm error at layer $\ell$. Under the recursion definitions \eqref{eq:orig_residual}--\eqref{eq:pruned_residual} and Assumption~A2, the error satisfies the following recurrence relation for $\ell=0,\dots,L-1$:
\begin{equation}
E_{\ell+1} \;\le\; (1+\alpha\lambda_\ell)\,\mathcal{E}_\ell \;+\; \alpha\,\|\delta_\ell(h_\ell,P_\ell)\|_1.
\label{eq:one_step_rec}
\end{equation}
\end{lemma}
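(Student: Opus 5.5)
The plan is to subtract the two residual recursions and then split the resulting increment into a pure propagation term and an injected pruning term. Subtracting \eqref{eq:orig_residual} from \eqref{eq:pruned_residual} gives
\[
\hat h_{\ell+1}-h_{\ell+1}
=(\hat h_\ell-h_\ell)+\alpha\bigl(\hat F_\ell(\hat h_\ell)-F_\ell(h_\ell)\bigr).
\]
We add and subtract $\hat F_\ell(h_\ell)$, which is the pruned block evaluated on the \emph{original} trajectory. This rewrites the increment as
\[
\hat F_\ell(\hat h_\ell)-F_\ell(h_\ell)
=\bigl(\hat F_\ell(\hat h_\ell)-\hat F_\ell(h_\ell)\bigr)+\bigl(\hat F_\ell(h_\ell)-F_\ell(h_\ell)\bigr).
\]
By \eqref{eq:injection_error_def}, the second bracket is exactly $\delta_\ell(h_\ell,P_\ell)$.

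Next, we take $\ell_1$ norms and apply the triangle inequality. This gives
\[
\mathcal{E}_{\ell+1}\le \mathcal{E}_\ell+\alpha\|\hat F_\ell(\hat h_\ell)-\hat F_\ell(h_\ell)\|_1+\alpha\|\delta_\ell(h_\ell,P_\ell)\|_1 .
\]
Assumption~A2 then bounds the middle term by $\lambda_\ell\|\hat h_\ell-h_\ell\|_1=\lambda_\ell\mathcal{E}_\ell$. Collecting the coefficients of $\mathcal{E}_\ell$ yields $(1+\alpha\lambda_\ell)\mathcal{E}_\ell+\alpha\|\delta_\ell(h_\ell,P_\ell)\|_1$, which is \eqref{eq:one_step_rec}. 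Note that the left-hand side $E_{\ell+1}$ in the statement denotes $\mathcal{E}_{\ell+1}$.

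The only delicate point is the legitimacy of invoking A2. Both $h_\ell$ and $\hat h_\ell$ must lie in the common domain $\mathcal{D}$, and the constant must hold for the particular mask $P_\ell$ in use. Both conditions are built into Assumption~A2, which requires $\mathcal{D}$ to contain both trajectories and the inequality to hold uniformly over admissible masks. I will state explicitly that the Lipschitz property is applied to $\hat F_\ell$, not to $F_\ell$, which is why we add and subtract $\hat F_\ell(h_\ell)$ rather than $F_\ell(\hat h_\ell)$. With the other choice, the injected error would be evaluated on the pruned trajectory instead of the original one. The argument is pointwise for a fixed mask sequence, so no probabilistic assumption (A3) is needed here.
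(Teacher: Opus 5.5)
Your proposal is correct and follows the paper's proof: you subtract the two recursions, add and subtract $\hat F_\ell(h_\ell)$, apply Assumption~A2 to $\hat F_\ell$, and identify the remaining term as $\delta_\ell(h_\ell,P_\ell)$ via \eqref{eq:injection_error_def}. You also correctly note that $E_{\ell+1}$ in the statement means $\mathcal{E}_{\ell+1}$, and that A2's common-domain and uniform-over-masks clauses are what make the Lipschitz step legitimate.
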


\begin{proof}
Subtracting \eqref{eq:orig_residual} from \eqref{eq:pruned_residual} and applying the triangle inequality yields:
\begin{align}
E_{\ell+1}
&= \bigl\| (\hat h_\ell + \alpha \hat F_\ell(\hat h_\ell)) - (h_\ell + \alpha F_\ell(h_\ell)) \bigr\|_1 \nonumber \\
&\le \|\hat h_\ell - h_\ell\|_1 + \alpha \bigl\|\hat F_\ell(\hat h_\ell) - F_\ell(h_\ell)\bigr\|_1.
\end{align}
We decompose the functional difference by adding and subtracting $\hat F_\ell(h_\ell)$:
\begin{align}
\bigl\|\hat F_\ell(\hat h_\ell) - F_\ell(h_\ell)\bigr\|_1
&\le \bigl\|\hat F_\ell(\hat h_\ell) - \hat F_\ell(h_\ell)\bigr\|_1 \nonumber\\
&\quad + \bigl\|\hat F_\ell(h_\ell) - F_\ell(h_\ell)\bigr\|_1 \nonumber \\
&\le \lambda_\ell \|\hat h_\ell - h_\ell\|_1 \nonumber\\
&\quad + \|\delta_\ell(h_\ell, P_\ell)\|_1,
\end{align}
where the second inequality follows from the Lipschitz property (Assumption~A2) and the definition of the injection error \eqref{eq:injection_error_def}. Substituting this back into the expression for $E_{\ell+1}$ concludes the proof.
\end{proof}

\begin{lemma}[Global error bound via Discrete Gronwall]
\label{lemma:gronwall}
Suppose $\hat h_0 = h_0$ (i.e., $E_0=0$). Then, for any fixed sequence of pruning masks $\{P_\ell\}$, the final layer error satisfies:
\begin{equation}
\begin{aligned}
E_L &\le \alpha \sum_{\ell=0}^{L-1}
\left( \prod_{k=\ell+1}^{L-1} (1+\alpha\lambda_k) \right)\\
&\qquad\cdot \|\delta_\ell(h_\ell, P_\ell)\|_1.
\end{aligned}
\label{eq:unrolled_gronwall}
\end{equation}
Here, we adopt the convention that the empty product $\prod_{k=L}^{L-1}(\cdot)$ equals $1$.  
\end{lemma}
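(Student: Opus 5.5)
The plan is to unroll the one-step recursion of Lemma~\ref{lemma:one_step} by induction on the layer index, starting from $E_0=0$. Write $a_\ell \triangleq 1+\alpha\lambda_\ell\ge 1$ and $b_\ell\triangleq \alpha\|\delta_\ell(h_\ell,P_\ell)\|_1\ge 0$. The recursion then reads $E_{\ell+1}\le a_\ell E_\ell + b_\ell$ for every fixed mask sequence. Since Assumption~A2 holds uniformly over admissible masks on the common domain $\mathcal{D}$ containing both trajectories, nothing here requires randomness.

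The claim to establish by induction is
\[
E_n \le \sum_{\ell=0}^{n-1}\Bigl(\prod_{k=\ell+1}^{n-1} a_k\Bigr) b_\ell, \qquad n=0,1,\dots,L,
\]
with the empty sum equal to $0$ and the empty product equal to $1$.

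\textbf{Base case.} For $n=0$ the bound is $E_0=0$, which holds by the assumption $\hat h_0=h_0$.

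\textbf{Inductive step.} Assume the bound at $n$. Apply $E_{n+1}\le a_nE_n+b_n$, and use $a_n\ge 0$ so that multiplying the inductive bound by $a_n$ preserves the inequality. This gives
\[
E_{n+1}\le \sum_{\ell=0}^{n-1}\Bigl(a_n\prod_{k=\ell+1}^{n-1}a_k\Bigr)b_\ell + b_n = \sum_{\ell=0}^{n}\Bigl(\prod_{k=\ell+1}^{n}a_k\Bigr)b_\ell .
\]
Here the term $b_n$ corresponds to $\ell=n$ with the empty product equal to $1$.

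Setting $n=L$ and substituting $b_\ell$ yields Eq.~\eqref{eq:unrolled_gronwall}, which is also Proposition~\ref{prop:propagation} with $\Gamma_\ell=\alpha\prod_{k>\ell}(1+\alpha\lambda_k)$.

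The only delicate point is the index bookkeeping: the empty-product convention at the top index, and the notation clash between $\mathcal{E}_\ell$ and $E_\ell$ for the state error. One must also check that the Lipschitz step in Lemma~\ref{lemma:one_step} is legitimate, because both $h_\ell$ and $\hat h_\ell$ lie in $\mathcal{D}$ by hypothesis. If an exponential form is wanted, the additional bound $\prod_k(1+\alpha\lambda_k)\le\exp(\alpha\sum_k\lambda_k)$ follows from $1+t\le e^t$.
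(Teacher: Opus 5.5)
Your proof is correct and follows the same route as the paper, which unrolls the one-step recursion of Lemma~\ref{lemma:one_step} from $E_0=0$ by writing out $E_1$ and $E_2$ and then saying ``iterate.'' Your explicit induction on $n$, with the empty-product convention and the remark that $1+\alpha\lambda_n\ge 0$ preserves the inequality under multiplication, makes that iteration step fully rigorous.
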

\emph{Note that the product $\prod_{k=\ell+1}^{L-1}(1+\alpha\lambda_k)$ appears because the perturbation injected at layer $\ell$ is subsequently amplified only by later layers.}

\begin{proof}
We proceed by induction or simply unrolling the recurrence from Lemma~\ref{lemma:one_step}. For $E_0=0$, we have:
\begin{align}
E_1 &\le \alpha\|\delta_0\|_1, \nonumber \\
E_2 &\le (1+\alpha\lambda_1)E_1 + \alpha\|\delta_1\|_1 \nonumber\\
&\le \alpha(1+\alpha\lambda_1)\|\delta_0\|_1
+ \alpha\|\delta_1\|_1.
\end{align}
Iterating this substitution $L$ times yields the summation in \eqref{eq:unrolled_gronwall}. The term $\|\delta_\ell\|_1$ is introduced at step $\ell$ and is subsequently scaled by the expansion factors $(1+\alpha\lambda_k)$ for all future layers $k > \ell$.
\end{proof}

\begin{lemma}[Exponential upper bound]
\label{lemma:exp_bound}
For any non-negative sequence $\{\lambda_k\}$, the product term admits the exponential upper bound:
\begin{equation}
\prod_{k=\ell+1}^{L-1}(1+\alpha\lambda_k) \;\le\; \exp\left(\alpha \sum_{k=\ell+1}^{L-1} \lambda_k \right).
\label{eq:prod_to_exp}
\end{equation}
\end{lemma}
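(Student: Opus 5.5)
The argument is termwise: bound each factor $1+\alpha\lambda_k$ by an exponential, then multiply. The only ingredient is the elementary inequality $1+t\le e^{t}$ for all real $t$. For $t\ge 0$ it follows from the power series $e^{t}=1+t+\sum_{n\ge2}t^n/n!$, whose tail is nonnegative. For all $t$ it follows from convexity of $e^{t}$, whose tangent line at $t=0$ is $1+t$.

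Set $t_k=\alpha\lambda_k$. Since $\alpha>0$ (the residual scaling from Eq.~\eqref{eq:orig_residual}) and $\lambda_k\ge0$ (Assumption~A2), each $t_k\ge 0$. Hence
\[
0\le 1+\alpha\lambda_k\le e^{\alpha\lambda_k}
\]
for every $k$. Nonnegativity matters here: only nonnegative factors can be multiplied inequality by inequality while preserving the direction. Taking the product over $k=\ell+1,\dots,L-1$ gives
\[
\prod_{k=\ell+1}^{L-1}(1+\alpha\lambda_k)\le\prod_{k=\ell+1}^{L-1}e^{\alpha\lambda_k}=\exp\Bigl(\alpha\sum_{k=\ell+1}^{L-1}\lambda_k\Bigr).
\]
The last equality is just the identity $e^a e^b=e^{a+b}$. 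If one wants this step fully formal, a short induction on the number of factors suffices.

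The edge case $\ell=L-1$ needs a separate check. There the product is empty and equals $1$ by the convention in Lemma~\ref{lemma:gronwall}. The sum is also empty, so the right-hand side is $\exp(0)=1$, and the inequality holds with equality.

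There is no real obstacle. The only point needing care is that the factors are nonnegative, which is what justifies multiplying the termwise inequalities.
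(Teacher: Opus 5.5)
Your proof is correct and takes the same route as the paper: apply $1+x\le e^{x}$ to each factor with $x=\alpha\lambda_k\ge 0$, then multiply to get $\prod_k e^{\alpha\lambda_k}=e^{\alpha\sum_k\lambda_k}$. Your explicit check that the factors are nonnegative, and your treatment of the empty-product case $\ell=L-1$, are small details the paper leaves implicit.
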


\begin{proof}
The result follows immediately from the elementary inequality $1+x \le e^x$ for $x \ge 0$. Taking the product over $k$ yields $\prod (1+\alpha\lambda_k) \le \prod e^{\alpha\lambda_k} = e^{\alpha \sum \lambda_k}$.
\end{proof}

\subsubsection{Main Theorem: End-to-end Expected Perturbation Bound}

\begin{theorem}[End-to-end perturbation upper bound under layer-wise random pruning]
\label{thm:main_bound}
Suppose Assumptions~A1--A3 hold. Let the network follow the recursion defined in \eqref{eq:orig_residual}--\eqref{eq:pruned_residual}. The expected $\ell_1$-norm perturbation at the final layer satisfies:
\begin{equation}
\begin{aligned}
\mathbb{E}\big[\|\hat h_L - h_L\|_1\big]
&\le \alpha \sum_{\ell=0}^{L-1} w_\ell\\
&\quad\cdot\sqrt{
\mathbb{E}_{x \sim \mathcal{D}_\ell,\, P_\ell}
\big[\|\delta_\ell(x, P_\ell)\|_1^2\big]},
\end{aligned}
\label{eq:e2e_expectation_generic}
\end{equation}
where $w_\ell$ denotes the downstream propagation factor for layer $\ell$:
\begin{equation}
w_\ell \triangleq \prod_{k=\ell+1}^{L-1}(1 + \alpha\lambda_k).
\label{eq:w_def}
\end{equation}
For compactness, define the entropy-dependent local term
\begin{equation}
\begin{aligned}
R_\ell \triangleq{}& c_0(m_\ell,N_\ell)\\
&+c_1(m_\ell,N_\ell)
\left[\frac{1}{K}+2(\log K-\bar H_\ell)\right].
\end{aligned}
\label{eq:local_entropy_term}
\end{equation}
Substituting the single-layer bound \eqref{eq:main-bound} then yields
\begin{equation}
\mathbb{E}\big[\|\hat h_L - h_L\|_1\big]
\le 2\alpha \sum_{\ell=0}^{L-1} w_\ell B_\ell\sqrt{R_\ell}.
\label{eq:e2e_entropy_explicit}
\end{equation}
Finally, applying the exponential upper bound from Lemma~\ref{lemma:exp_bound}, we obtain:
\begin{equation}
\begin{aligned}
\mathbb{E}\big[\|\hat h_L - h_L\|_1\big]
&\le 2\alpha \sum_{\ell=0}^{L-1}
\exp\!\left(\alpha\sum_{k=\ell+1}^{L-1}\lambda_k\right)\\
&\quad\cdot B_\ell\sqrt{R_\ell}.
\end{aligned}
\label{eq:e2e_exp_form}
\end{equation}
\end{theorem}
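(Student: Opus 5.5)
The plan is to start from the pathwise Gronwall bound of Lemma~\ref{lemma:gronwall}, which holds for every fixed realization of $h_0$ and of the mask sequence $\{P_\ell\}$. With $E_0=0$, it gives
\[
\|\hat h_L-h_L\|_1\le \alpha\sum_{\ell=0}^{L-1} w_\ell\,\|\delta_\ell(h_\ell,P_\ell)\|_1 ,
\]
with $w_\ell$ as in \eqref{eq:w_def}. Because Assumption~A2 supplies deterministic constants $\lambda_\ell$ that are valid uniformly over admissible masks, every $w_\ell$ is deterministic. Taking expectations over $h_0$ and all masks, and using linearity, I get
\[
\mathbb{E}\|\hat h_L-h_L\|_1\le\alpha\sum_\ell w_\ell\,\mathbb{E}\|\delta_\ell(h_\ell,P_\ell)\|_1 .
\]

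Next, I will bound each first moment by a second moment through Jensen (or Cauchy--Schwarz): $\mathbb{E}\|\delta_\ell\|_1\le\sqrt{\mathbb{E}\|\delta_\ell\|_1^2}$. The delicate point, and the one I expect to be the main obstacle, is matching the joint law of $(h_\ell,P_\ell)$ with the product law $x\sim\mathcal{D}_\ell$, $P_\ell\sim\mathrm{Unif}$ that appears in \eqref{eq:e2e_expectation_generic}. I will argue as follows. The original state $h_\ell$ is a deterministic function of $h_0$ alone, since the unpruned recursion \eqref{eq:orig_residual} involves no masks. By Assumption~A3, $P_\ell$ is independent of $h_0$, and hence of $h_\ell$. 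So if $\mathcal{D}_\ell$ is defined as the law of the original layer-$\ell$ input $h_\ell$, then $(h_\ell,P_\ell)$ has exactly the product law $\mathcal{D}_\ell\otimes\mathrm{Unif}(\mathcal{P}_{\ell,m_\ell})$. This is precisely why the decomposition evaluates $\delta_\ell$ at $h_\ell$ rather than at $\hat h_\ell$. The law of $\hat h_\ell$ would depend on the earlier masks, but because those masks never enter $\delta_\ell(h_\ell,P_\ell)$, their independence from $P_\ell$ is not needed here. This gives \eqref{eq:e2e_expectation_generic}.

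Having the product law is exactly what lets me invoke the single-layer theorem \eqref{eq:main-bound}, whose hypotheses are $P$ uniform and independent of $x$, together with Assumption~1. It gives
\[
\mathbb{E}_{x,P_\ell}\|\delta_\ell\|_1^2\le 4B_\ell^2R_\ell ,
\]
with $\bar H_\ell$ computed under $\mathcal{D}_\ell$. Taking the square root yields $2B_\ell\sqrt{R_\ell}$, and substituting this into the sum gives \eqref{eq:e2e_entropy_explicit}.

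Finally, Lemma~\ref{lemma:exp_bound} gives $w_\ell\le\exp(\alpha\sum_{k>\ell}\lambda_k)$ because $\lambda_k\ge0$. Every other factor in each summand is nonnegative, so replacing $w_\ell$ by this bound preserves the inequality and produces \eqref{eq:e2e_exp_form}.
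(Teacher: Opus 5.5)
Your proposal is correct and follows the paper's proof step for step. Both start from the pathwise Gronwall bound of Lemma~\ref{lemma:gronwall}, take expectations with the deterministic weights $w_\ell$, apply Jensen to move to second moments, and use Assumption~A3 to give $(h_\ell,P_\ell)$ the product law. Both then invoke the single-layer bound \eqref{eq:main-bound} and Lemma~\ref{lemma:exp_bound}. Your remark that the original state $h_\ell$ depends only on $h_0$, so earlier masks never enter $\delta_\ell(h_\ell,P_\ell)$, is the justification the paper leaves implicit when it asserts that $P_\ell$ is independent of $h_\ell$.
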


\begin{proof}
We commence with the unrolled discrete Gronwall bound established in Lemma~\ref{lemma:gronwall} (Eq.~\ref{eq:unrolled_gronwall}):
\begin{equation}
\|\hat h_L - h_L\|_1
\le \alpha \sum_{\ell=0}^{L-1} w_\ell\,\|\delta_\ell(h_\ell, P_\ell)\|_1.
\end{equation}
Taking the expectation with respect to the initialization $h_0 \sim \mathcal{D}_0$ and the sequence of independent pruning masks $\{P_\ell\}$, we have:
\begin{equation}
\mathbb{E}\big[\|\hat h_L - h_L\|_1\big]
\le \alpha \sum_{\ell=0}^{L-1} w_\ell \,\mathbb{E}\big[\|\delta_\ell(h_\ell, P_\ell)\|_1\big].
\label{eq:e2e_step1}
\end{equation}
Since $f(z) = z^2$ is convex, Jensen's inequality implies $\mathbb{E}[X] \le \sqrt{\mathbb{E}[X^2]}$ for any non-negative random variable $X$. Applying this to the error term:
\begin{equation}
\mathbb{E}\big[\|\delta_\ell(h_\ell, P_\ell)\|_1\big]
\le
\sqrt{\mathbb{E}\big[\|\delta_\ell(h_\ell, P_\ell)\|_1^2\big]}.
\label{eq:jensen}
\end{equation}
By definition, the distribution of the layer input $h_\ell$ is denoted by $\mathcal{D}_\ell$. Under Assumption~A3, the mask $P_\ell$ is independent of $h_\ell$ and uniformly distributed. We can therefore rewrite the expectation over the joint distribution:
\begin{equation}
\mathbb{E}\big[\|\delta_\ell(h_\ell, P_\ell)\|_1^2\big]
=
\mathbb{E}_{x \sim \mathcal{D}_\ell,\, P_\ell}\big[\|\delta_\ell(x, P_\ell)\|_1^2\big].
\end{equation}
Substituting \eqref{eq:jensen} into \eqref{eq:e2e_step1} yields the generic bound \eqref{eq:e2e_expectation_generic}.

To derive the explicit forms, we substitute the single-layer result \eqref{eq:main-bound} for the squared error term into \eqref{eq:e2e_expectation_generic}, which directly yields \eqref{eq:e2e_entropy_explicit}. The final exponential form \eqref{eq:e2e_exp_form} follows by applying the inequality $\prod (1+x) \le \exp(\sum x)$ to the propagation weights $w_\ell$.
\end{proof}

\subsection{Corollaries and Design Implications}

\begin{corollary}[Stability under Depth-scaled Initialization]
\label{corr:deepnet_scaling}
Consider a scaling regime where $\alpha = \tfrac{1}{L}$. The error bound \eqref{eq:e2e_exp_form} simplifies to:
\begin{equation}
\begin{aligned}
\mathbb{E}\big[\|\hat h_L - h_L\|_1\big]
&\le \frac{2}{L}\sum_{\ell=0}^{L-1}
\exp\!\left(\frac{1}{L}\sum_{k=\ell+1}^{L-1}\lambda_k\right)\\
&\quad\cdot B_\ell\sqrt{R_\ell}.
\end{aligned}
\end{equation}
If the Lipschitz constants satisfy $\lambda_k \le \lambda_{\max}$, the propagation factor obeys $\exp(\sum_{k=\ell+1}^{L-1}\frac{\lambda_k}{L}) \le e^{\lambda_{\max}}$. Under the stated depth scaling, this factor is bounded independently of $L$.
\end{corollary}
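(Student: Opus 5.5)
The plan is to specialize the exponential-form bound \eqref{eq:e2e_exp_form} of Theorem~\ref{thm:main_bound} to $\alpha=1/L$. I will then bound the propagation factor uniformly in $\ell$. Nothing new is needed beyond Theorem~\ref{thm:main_bound} and Lemma~\ref{lemma:exp_bound}, so the argument is a direct substitution followed by an elementary estimate.

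First, under Assumptions~A1--A3, Theorem~\ref{thm:main_bound} gives
\[
\mathbb{E}\|\hat h_L-h_L\|_1\le 2\alpha\sum_{\ell=0}^{L-1}\exp\Bigl(\alpha\sum_{k=\ell+1}^{L-1}\lambda_k\Bigr)B_\ell\sqrt{R_\ell}.
\]
Setting $\alpha=1/L$ turns the prefactor $2\alpha$ into $2/L$ and the exponent into $\frac1L\sum_{k=\ell+1}^{L-1}\lambda_k$. This is exactly the displayed inequality of the corollary. Theorem~\ref{thm:main_bound} holds for any $\alpha>0$, so this step needs no further hypothesis.

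Second, assume $\lambda_k\le\lambda_{\max}$ for all $k$. The inner sum has $L-1-\ell\le L-1<L$ terms, so
\[
\frac1L\sum_{k=\ell+1}^{L-1}\lambda_k\le\frac{(L-1-\ell)\lambda_{\max}}{L}\le\lambda_{\max}.
\]
Because $\exp$ is monotone, each propagation factor is at most $e^{\lambda_{\max}}$, a constant independent of $L$ and $\ell$. One can also record the consequence
\[
\mathbb{E}\|\hat h_L-h_L\|_1\le 2e^{\lambda_{\max}}\cdot\frac1L\sum_{\ell}B_\ell\sqrt{R_\ell},
\]
which is $2e^{\lambda_{\max}}$ times the layer-average local error.

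The main obstacle is interpretive rather than technical. The bound is only meaningful if $\lambda_k$ is understood as a Lipschitz constant of the pruned block $\hat F_\ell$ that does not itself grow with $L$ once $\alpha=1/L$ is fixed. I would state this explicitly: $\lambda_{\max}$ is a fixed constant, and the domain in Assumption~A2 must contain all trajectories. With that convention, the uniform-in-$L$ claim follows immediately from the two steps above.
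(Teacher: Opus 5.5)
Your proposal is correct and follows the same route as the paper, which gives no separate proof because the corollary is immediate: substitute $\alpha=1/L$ into \eqref{eq:e2e_exp_form}, bound the exponent by $(L-1-\ell)\lambda_{\max}/L\le\lambda_{\max}$, and use monotonicity of $\exp$. One small detail is left implicit: this last inequality, including the empty-sum case $\ell=L-1$, needs $\lambda_{\max}\ge 0$, which holds automatically because Assumption~A2 takes every $\lambda_k\ge 0$.
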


\begin{remark}[Scope of the allocation surrogate]
For large $N_\ell$ at low removal ratios $m_\ell/N_\ell\ll1$, $c_0$ is second order in the removal ratio and $c_1\approx m_\ell/N_\ell$. This approximation does not apply to the main 50\% pruning setting, where both coefficients approach $1/4$ as $N_\ell$ grows. Moreover, the coefficients depend on the removal budget and cannot be discarded when optimizing the bound. SPARE uses routing concentration only as a layer-ranking heuristic:
\begin{equation}
\mathrm{proxy}_\ell \propto S_\ell
% \quad\text{or}\quad
% \mathrm{proxy}_\ell \propto B_\ell^2\,\mathbb{E}_{x \sim \mathcal{D}_\ell}\!\left[\sum_j g_{\ell, j}(x)^2\right].
\end{equation}
The allocation rule does not minimize the summed perturbation bound and omits the layer-dependent output scale $B_\ell$.
\end{remark}

% ===================== Appendix (detailed analysis) =====================
\section{Communication cost analysis}
\label{sec:appendix_ep_comm_savings}

\paragraph{2 nodes $\rightarrow$ 1 node: cross-node EP traffic savings.}
In EP, each MoE layer incurs two all-to-all exchanges (dispatch + combine), whose activation payload is
$2\,Nkd b$ bytes per layer, where $N$ is the number of routed tokens per step (prefill: $N{=}B\!\times\!T$, decode: $N{=}B$),
$d$ is hidden width, $k$ is top-$k$ routing, and $b$ is bytes/element.
When the EP group spans $S{=}2$ nodes and expert ownership is balanced, the expected cross-node fraction is
$p_{\text{cross}} \approx 1-\frac{1}{2}=\frac{1}{2}$.
Therefore, the cross-node all-to-all volume per step across $L_{\text{moe}}$ MoE layers is
\begin{equation}
\begin{aligned}
V_{\text{cross}}(S{=}2)
&\approx (2\,Nkd b)\,L_{\text{moe}}\cdot \tfrac{1}{2}\\
&=Nkd b\,L_{\text{moe}}.
\end{aligned}
\label{eq:cross_node_a2a_s2_simple}
\end{equation}

Plugging DeepSeek-R1-like settings ($k{=}8$, $d{=}7168$, $b{=}2$ bytes, $L_{\text{moe}}{=}58$),
\begin{equation}
\begin{aligned}
V_{\text{cross}}(S{=}2)
&\approx 6{,}651{,}904\,N\ \text{bytes}\\
&\approx 6.65\ \text{MB}\cdot N\\
&\approx 6.34\ \text{MiB}\cdot N\ \text{per step}.
\end{aligned}
\label{eq:cross_node_a2a_numeric}
\end{equation}

If pruning enables consolidating EP from $S{=}2$ to $S{=}1$, then $p_{\text{cross}}=0$ and the cross-node traffic vanishes.
Hence the \emph{saved} cross-node communication is simply
\begin{equation}
\begin{aligned}
\Delta V_{\text{cross}}(2\rightarrow 1)
&\approx V_{\text{cross}}(S{=}2)\\
&\approx 6.34\ \text{MiB}\cdot N\ \text{per step}.
\end{aligned}
\end{equation}
For example, in decode with batch $B$, $N{=}B$ so the saving is $\approx 6.34\,\text{MiB}\times B$ per step.

%%%%%%%%%%%%%%%%%%%%%%%%%%%%%%%%%%%%%%%%%%%%%%%%%%%%%%%%%%%%

\end{document}